\definecolor{def1}{rgb}{0.12, 0.47, 0.705}
\definecolor{def2}{rgb}{1.0, 0.5, 0.055}
\definecolor{def3}{rgb}{0.17, 0.63, 0.17}
\definecolor{def4}{rgb}{0.84, 0.15, 0.16}
\definecolor{def5}{rgb}{0.58, 0.4, 0.74}
\definecolor{def6}{rgb}{0.74, 0.74, 0.13}
\DeclareRobustCommand{\legendsquare}[1]{%
  \textcolor{#1}{\rule{1.2ex}{1.2ex}}%
}
\DeclareMathOperator*{\argmax}{arg\,max}
\DeclareMathOperator*{\argmin}{arg\,min}
\newsavebox\CBox
\newcommand{\norm}[1]{\left\lVert#1\right\rVert}
\newtheorem{definition}{Definition}
\newtheorem{example}{Example}
\newtheorem{theorem}{Theorem}
\newtheorem{proposition}{Proposition}
\newtheorem{corollary}{Corollary}
\newcommand\inner[1]{\langle #1 \rangle}
\icmltitlerunning{On the Adversarial Robustness of Causal Algorithmic Recourse}
\DeclareMathOperator{\sign}{sign}
\begin{document}

\twocolumn[
\icmltitle{On the Adversarial Robustness of Causal Algorithmic Recourse}



\icmlsetsymbol{equal}{*}

\begin{icmlauthorlist}
\icmlauthor{Ricardo Dominguez-Olmedo}{mpi,tue}
\icmlauthor{Amir-Hossein Karimi}{mpi,eth}
\icmlauthor{Bernhard Sch\"olkopf}{mpi}
\end{icmlauthorlist}

\icmlaffiliation{tue}{University of T\"ubingen, Germany}
\icmlaffiliation{mpi}{Max Planck Institute for Intelligent Systems, T\"ubingen, Germany}
\icmlaffiliation{eth}{ETH Z\"urich, Switzerland}

\icmlcorrespondingauthor{Ricardo Dominguez-Olmedo}{ricardo.olmedo@tuebingen.mpg.de}

\icmlkeywords{Machine Learning, ICML}

\vskip 0.3in
]



\printAffiliationsAndNotice{}  

\begin{abstract}
Algorithmic recourse seeks to provide actionable recommendations for individuals to overcome unfavorable classification outcomes from automated decision-making systems. Recourse recommendations should ideally be robust to reasonably small uncertainty in the features of the individual seeking recourse. In this work, we formulate the adversarially robust recourse problem and show that recourse methods that offer minimally costly recourse fail to be robust. We then present methods for generating adversarially robust recourse for linear and for differentiable classifiers. Finally, we show that regularizing the decision-making classifier to behave locally linearly and to rely more strongly on actionable features facilitates the existence of adversarially robust recourse.
\end{abstract}

\section{Introduction}

Machine learning (ML) classifiers are increasingly being used for consequential decision-making in sensitive domains such as criminal justice and finance (e.g., granting pretrial bail or loan approval). The need to preserve human agency despite the rise in automated decisions faced by individuals has motivated the study of algorithmic recourse, which aims to empower individuals by providing them with actionable recommendations to reverse unfavorable algorithmic decisions~\cite{ustun2019actionable}. Prior works have argued that for recourse to warrant trust, the decision-maker must commit to reversing an unfavorable decision upon the decision-subject fully adopting their prescribed recourse recommendations~\cite{wachter2017counterfactual, venkatasubramanian2020philosophical,karimi2020survey}. We argue that if algorithmic recourse is indeed to be treated as a contractual agreement, then recourse recommendations must be robust to plausible uncertainties arising in the recourse process. 

For instance, consider a banking institution that promises to approve the loan of an individual if they increase their savings by some given amount. Suppose that by the time the individual achieves the prescribed savings increase, the individual's weekly working hours have been slightly reduced due to unforeseen circumstances and the decision-making classifier still deems the individual likely to default on the loan. Shielding recourse against uncertainty \emph{ex-post} by nonetheless granting the loan may be detrimental to both the bank (e.g., monetary loss) and the individual (e.g., bankruptcy and inability to secure future loans), while breaking the recourse promise would negate the effort exerted by the individual and erode trust in the decision maker. We therefore argue for the necessity of ensuring that recourse recommendations are \emph{ex-ante} robust to uncertainty.  

\begin{figure}
     \begin{center}
    \scalebox{.60}{\begin{adjustbox}{clip,trim=0.35cm 0.7cm 0.7cm 0.0cm}
            {\includegraphics{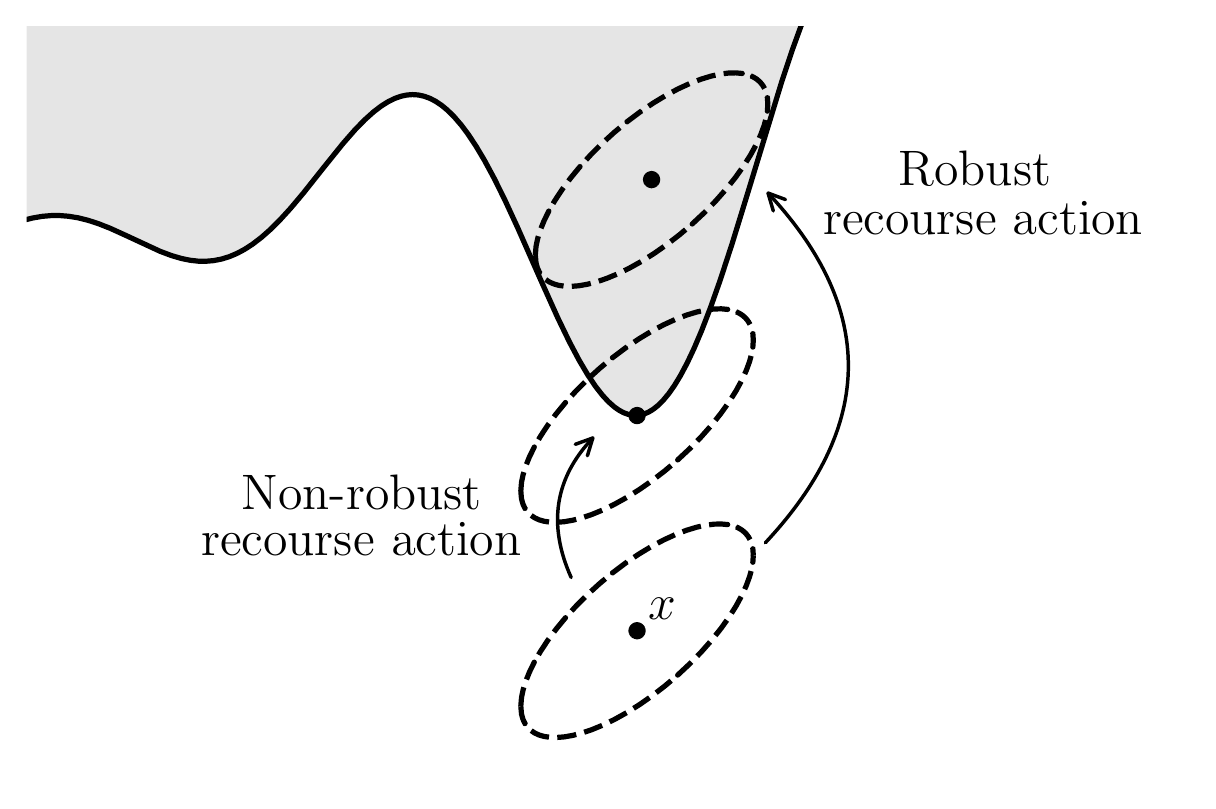}}
    \end{adjustbox}}
    \end{center}
    \caption{Adversarially robust recourse actions must lead to positive classification outcomes for \emph{all} individuals in the uncertainty set around the individual $x$ seeking recourse.}
        \label{fig:recourse}
\end{figure}

In this work, we direct our focus towards robustifying recourse recommendations against uncertainty in the features of the individual seeking recourse. Such uncertainty may arise due to the temporal nature of recourse (e.g., some features may not be static), and/or the presence of noise, adversarial manipulation and other misrepresentations or errors. We adopt a robust optimization view and propose to characterize the uncertainty around the \emph{reported} features of the individual $x$ by defining an uncertainty set $B(x)$ which we assume contains the \emph{true} features of the individual at the time recourse is offered and/or \emph{plausible} future feature values arising from the temporal nature of recourse. We then seek recourse recommendations that remain valid (i.e., lead to favorable classification outcomes) for \emph{all} plausible individuals in the uncertainty set $B(x)$, as illustrated in Figure~\ref{fig:recourse}. We refer to this notion of robustness as the \emph{adversarial robustness of recourse}.
We study the adversarial robustness of recourse from the lens of causality~\cite{pearl2009causality}. Causal recourse models recourse recommendations as causal interventions on the features of the individual seeking recourse~\cite{karimi2021algorithmic}, and therefore presents a faithful account of how the features of the individual change as the individual acts on their recourse recommendations, provided that the underlying structural causal model is known or can be approximated from observational data reasonably well~\cite{karimi2020algorithmic}.

\paragraph{Contributions}
\begin{itemize}[topsep=0pt]
    \item We formulate the adversarially robust recourse problem and show that minimum-cost recourse recommendations are provably fragile to arbitrarily small uncertainty in the features of the individual seeking recourse.
    \item We present methods for generating adversarially robust causal recourse for linear and for differentiable classifiers. We demonstrate their effectiveness on five tabular datasets, for linear and neural network classifiers.
    \item We propose a model regularizer that encourages the decision-making classifier to behave locally linearly and to rely more strongly on actionable features. We show that our proposed model regularizer facilitates the existence of adversarially robust recourse.
\end{itemize}

\section{Background and related work}

\subsection{Background on causality}\label{sec:causality}

We assume that the data-generating process of the features $\mathbf{X}=\{X_1, \ldots, X_n\}$  of individuals $x\in\mathcal{X}$ is characterized by a known \emph{structural causal model} (SCM)~\cite{pearl2009causality} $\mathcal{M}=(\mathbf{S}, P_{\mathbf{U}})$. The structural equations $\mathbf{S} = \left\{X_i := f_i\left(\mathbf{X}_{\text{pa}(i)}, \mathbf{U}_i\right)\right\}_{i=1}^n$ describe the causal relationship between any given feature $X_i$, its direct causes $\mathbf{X}_{\text{pa}(i)}$ and some exogenous variable $\mathbf{U}_i$ as a deterministic function $f_i$. The \emph{exogenous variables} $\mathbf{U}\in\mathcal{U}$, which are distributed according to some probability distribution $P_{\mathbf{U}}$, represent unobserved background factors which are responsible for the variations observed in the data. We assume that the causal graph $\mathcal{G}$ implied by the SCM, with nodes $\mathbf{X}\cup\mathbf{U}$ and edges ${\{(v, \mathbf{X}_i): v \in \mathbf{X}_{\text{pa}(i)}\cup\mathbf{U}_i, i\in[1,n]\}}$, is acyclic. The SCM $\mathcal{M}$ then implies a unique \emph{observational distribution} $p_{\mathbf{X}}$ over the features $\mathbf{X}$. Moreover, the structural equations $\mathbf{S}$ induce a mapping $\mathbb{S}: \mathcal{U} \rightarrow \mathcal{X}$ between exogenous and endogenous variables. Under the assumption that the exogenous variables are mutually independent (\emph{causal sufficiency}), if there exists some inverse mapping $\mathbb{S}^{-1}: \mathcal{X} \rightarrow \mathcal{U}$ such that $\mathbb{S}\left(\mathbb{S}^{-1}(x)\right)=x \;\; \forall x\in\mathcal{X}$, then the endogenous variables corresponding to some individual $x\in\mathcal{X}$ are uniquely identifiable by ${\mathbf{U}|x = \mathbb{S}^{-1}(x)}$.

SCMs allow for modelling and evaluating the effect of interventions on the system which the SCM models. \emph{Hard interventions} $do(\mathbf{X}_{\mathcal{I}}=\mathbf{\theta})$~\cite{pearl2009causality} fix the values of a subset $\mathcal{I}\subseteq[d]$ of features $\mathbf{X}_{\mathcal{I}}$ to some $\theta\in\mathbb{R}^{|\mathcal{I}|}$ by altering the structural equations of the intervened upon variables ${{\mathbf{S}^{do(\mathbf{X}_{\mathcal{I}}=\theta)}_{\mathcal{I}_i} = \mathbf{X}_{\mathcal{I}_i} := \mathbf{\theta}_i}}$ while preserving the rest of the structural equations ${\mathbf{S}^{do(\mathbf{X}_{\mathcal{I}}=\mathbf{\theta})}_i = \mathbf{S}_i \;\ \forall i\notin\mathcal{I}}$. Thus, hard interventions sever the causal relationship between an intervened upon variable and all of its ancestors in the causal graph. Soft interventions, on the other hand, may modify the structural equations in a more general manner~\cite{korb2004varieties}. In particular, \emph{additive interventions}~\cite{eberhardt2007interventions} perturb the features $\mathbf{X}$ by some perturbation vector $\Delta\in\mathbb{R}^n$ while preserving all causal relationships, altering the structural equations to ${\mathbf{S}^{\Delta} = \left\{X_i := f_i\left(\mathbf{X}_{\text{pa}(i)}, \mathbf{U}_i\right)+\Delta_i\right\}_{i=1}^n}$.

Moreover, SCMs imply distributions over \emph{counterfactuals}, allowing to reason about what would have happened under certain hypothetical interventions all else being equal. The counterfactual $x^{\text{CF}}$ pertaining to some observed factual individual ${x\in\mathcal{X}}$ under some hypothetical hard intervention $ do(\mathbf{X}_{\mathcal{I}}=\mathbf{\theta})$ (resp. soft intervention $\Delta$) can be computed by first determining the exogenous variables $\mathbf{U}|x = \mathbb{S}^{-1}\left(x\right)$ corresponding to the individual $x$, and then applying the interventional mapping $\mathbb{S}^{do(\mathbf{X}_{\mathcal{I}}=\mathbf{\theta})}$ (resp. $\mathbb{S}^{\Delta}$) from endogenous to exogenous variables~\cite{pearl2009causality}. For notational convenience, we denote such mapping as $x^{\text{CF}} = \mathbb{CF}\left(x, do(\mathbf{X}_{\mathcal{I}}=\mathbf{\theta})\right):=\mathbb{S}^{do(\mathbf{X}_{\mathcal{I}}=\mathbf{\theta})}\left(\mathbb{S}^{-1}\left(x\right)\right)$ (resp. $x^{\text{CF}} = \mathbb{CF}\left(x, \Delta\right):=\mathbb{S}^{\Delta}\left(\mathbb{S}^{-1}\left(x\right)\right)$). We use the notation $x^{\text{CF}}=\mathbb{CF}\left(x, do(\mathbf{X}_{\mathcal{I}}=\mathbf{\theta}); \mathcal{M}\right)$ to highlight that the counterfactual $x^{\text{CF}}$ follows from a particular SCM $\mathcal{M}$.

\subsection{The causal recourse problem}\label{causrec}

Consider the setting where a classifier $h: \mathcal{X} \rightarrow \{0,1\}$ is used to assign either favorable or unfavorable outcomes to individuals $x\in\mathcal{X}$ (e.g., loan approval). 
We adopt the causal view of recourse introduced by~\citet{karimi2021algorithmic} and model recourse recommendations as hard interventions on the features of the individual seeking recourse. We consider interventions of the form ${a(x)=do(\mathbf{X}_{\mathcal{I}}=x_{\mathcal{I}}+\mathbf{\theta})}$, where ${\theta\in\mathbb{R}^{|\mathcal{I}|}}$ is the prescribed change to a subset of the features of the individual $x$. We consider this additive form, rather than ${a=do(\mathbf{X}_{\mathcal{I}}=\mathbf{\theta})}$ as \citet{karimi2021algorithmic}, to explicitly allow for uncertainty in the individual $x$ to propagate to the action $a(x)$. Otherwise, hard-intervening on all features would trivially shield the counterfactual $\mathbb{CF}(x, a)$ from uncertainty in~$x$. For notational simplicity, we refer to an action $a(x)$ as simply $a$ when the pertinent $x$ is clear from context (e.g., we refer to $\mathbb{CF}(x, a(x))$ as simply $\mathbb{CF}(x, a)$).

For a recourse action $a$ to be considered \emph{valid}, the corresponding counterfactual individual must be favorably classified, that is, $h\left(\mathbb{CF}\left(x, a; \mathcal{M}\right)\right)=1$. Since certain features may be immutable (e.g., race) or bounded (e.g., age), only feasible actions should be recommended. The action feasibility set $\mathcal{F}(x)$ captures the set of feasible actions available to an individual $x$. Additionally, the cost function $c(x,a)$ models the effort required by an individual $x\in\mathcal{X}$ to implement some recourse action~$a$. Finding the least effortful (i.e., minimum-cost) recourse action for some individual $x\in\mathcal{X}$ amounts to solving the following optimization problem:
\begin{equation}
\begin{aligned}
\argmin_{a(x)=do(\mathbf{X}_{\mathcal{I}}=x_{\mathcal{I}}+\mathbf{\theta})} \quad & c(x, a)\\
\textrm{s.t.} \quad & a\in\mathcal{F}(x) \\
  &\dbox{$h$}\left(\mathbb{CF}\left(\dbox{$x$}, a; \dbox{$\mathcal{M}$}\right)\right)=1  \\
\end{aligned}
\label{eq:recourse}
\end{equation}

As highlighted in Equation~\ref{eq:recourse}, uncertainty in the features of the individual $x$, the classifier $h$, and/or the SCM $\mathcal{M}$ may affect the validity of recourse. In Appendix~\ref{sec:uncertainties}, we discuss and relate the different sources of uncertainty arising throughout the recourse process.

The non-causal recourse setting is equivalent to the causal recourse setting under the \emph{independently manipulable features} (IMF) assumption, that is, if no causal relationships exist between the features of the individual. Under such assumption, $\mathbb{CF}\left(x, do(\mathbf{X}=x+\theta)\right)=x+\theta$.

\subsection{Related work}

We now draw connections with existing literature on the robustness of recourse. Previous works have studied the problem of generating recourse actions which remain valid under uncertainty in the classifier $h$. \citet{pawelczyk2020counterfactual} show that recourse actions which place the counterfactual in regions of the feature space with large data support are more robust to competing classifiers that perform equally well. However, recourse actions with large data support may be unnecessarily costly. In contrast, we seek robust recourse actions that are also minimally costly. Another line of work has considered robustness of recourse against changes to the classifier in response to dataset shift. \citet{rawal2020can} show that recourse actions are typically not robust to such model changes, and \citet{upadhyay2021towards} aim to mitigate this issue by generating recourse with a minimax optimization procedure where the cost of recourse is minimized subject to the recourse action being valid under adversarial changes to the classifier~$h$. While we adopt a similar minimax approach to generate robust recourse, we focus on robustifying recourse against uncertainty in the individual~$x$ rather than the classifier $h$. Subsequent works propose to instead adopt a distributionally robust viewpoint~\cite{black2021consistent, bui2021counterfactual, guo2022rocoursenet}. Likewise, a natural extension of our work is to adopt a distributionally robust optimization viewpoint.

Regarding robustness of recourse against uncertainty in the SCM $\mathcal{M}$, \citet{karimi2020algorithmic} consider the setting where the underlying SCM is not know and thus must be approximated from data, and propose a method to generate recourse recommendations which have low probability of being invalid due to the misspecification of the underlying SCM. Our work is tangential to \citet{karimi2020algorithmic}.

Previous works have identified that small changes to the features of the decision-subject $x$ may result in different recourse recommendations being offered with potentially very different costs~\citep{von2020fairness, slack2021counterfactual,artelt2021evaluating}. Instead of focusing on the cost of recourse, we study whether recourse actions remain valid under uncertainty in the individual $x$. In Appendix~\ref{sec:robcost}, we discuss in more detail the relation between these two different notions of robustness. The concurrent work of  \citet{virgolin2022robust} is most similar to ours, as they study the robustness of recourse to adversarial perturbations to the individual $x$. They propose an evolutionary algorithm to generate robust recourse, which they evaluate for random forest classifiers. In contrast, we focus on generating recourse for differentiable classifiers, in particular linear and neural network models. Additionally, we consider the more general causal recourse setting, and we model feature perturbations in a causal manner. Lastly, \citet{pawelczyk2022algorithmic} study the rate at which random perturbations to the actionable features of $x$ invalidate recourse, and propose a method to generate recourse which is less likely to be invalidated. In contrast, we aim to generate recourse which is, to the extent possible, provably robust against adversarial perturbations. We additionally consider the causal setting.

Finally, \citet{ross2021learning} propose to regularize the decision-making classifier at training time to facilitate the existence of recourse. They propose a regularizer which encourages the classifier to be very sensitive (i.e., fragile) to changes to the actionable features. Such fragility of the classifier, however, can be problematic when generating adversarially robust recourse. In contrast, our proposed regularizer encourages the classifier to be more robust to adversarial examples by regularizing it to behave locally linearly~\cite{qin2019adversarial}.

\section{Counterfactual uncertainty sets}

In the adversarial robustness literature, the uncertainty surrounding an observation $x$ is often modelled by an $\epsilon$-ball of uncertainty $B(x) = \{x + \Delta | \norm{\Delta} \leq \epsilon\}$ around the observed $x$, where the norm $\norm{\cdot}$ characterizes  some relevant notion of magnitude for perturbations $\Delta$ to the observation $x$ and $\epsilon$ specifies the amount of uncertainty under consideration~\cite{madry2017towards, bertsimas2019robust}.
Intuitively, small perturbations $\Delta$ to the observation $x$ result in plausibly similar data points.
Then, the uncertainty set $B(x)$ amounts to an $\epsilon$-neighborhood of plausible data points similar to~$x$.

 \begin{figure}
      \begin{center}
    \scalebox{.60}{\begin{adjustbox}{clip,trim=0.6cm 0cm 0cm 1.6cm}
            \includegraphics{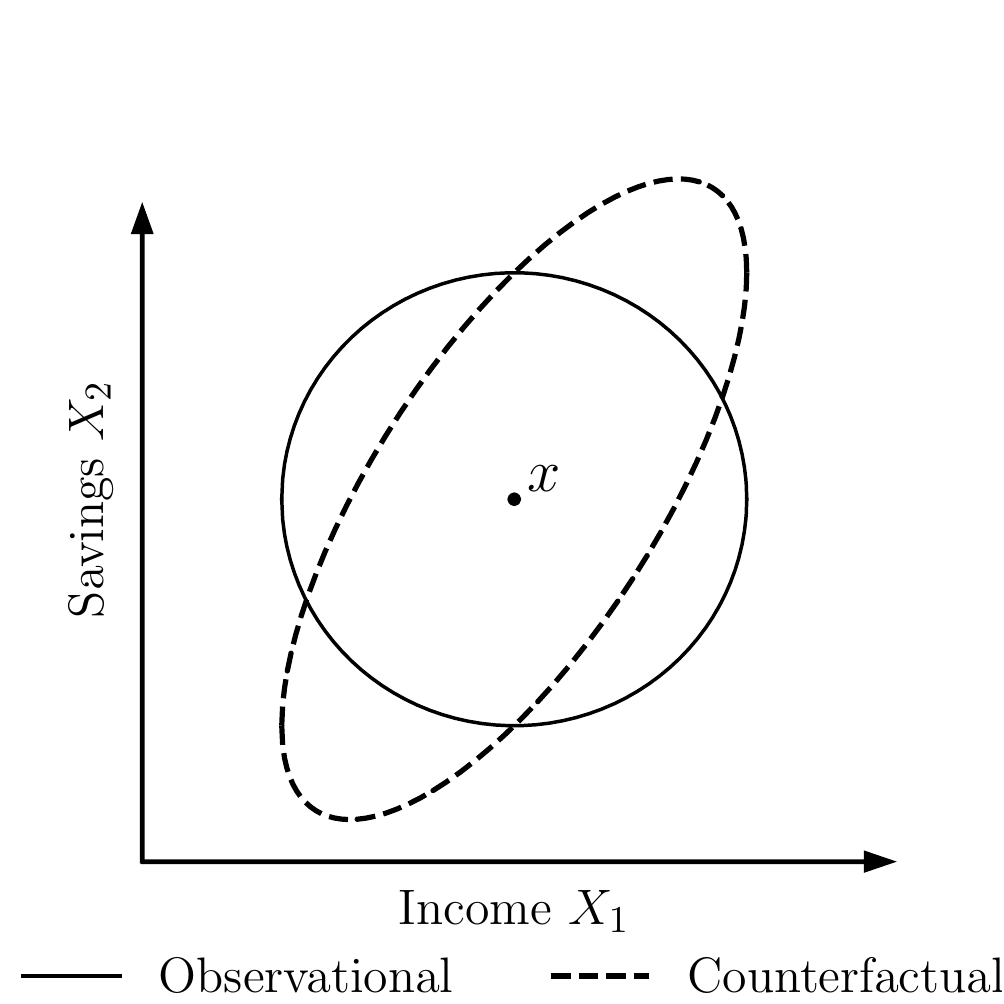}
    \end{adjustbox}}
    \end{center}
    \caption{Illustration of the observational and counterfactual neighborhoods of similar individuals for the linear SCM $X_1=U_1$ (Income), $X_2=X_1+U_2$ (Savings) under $\norm{\cdot}_2$.}
        \label{fig:similarity}
\end{figure}

From a causal perspective, an underlying assumption in the above definition of $B(x)$ is that the features are independently manipulable. We argue, however, that explicitly considering the causal relationships between features can result in potentially more informative neighborhoods of similar individuals. To account for the causal effects of the perturbations $\Delta$, we propose to model such perturbations as additive interventions on the features of the individual $x$. 

 \begin{definition}[Counterfactual neighborhood of similar individuals] For some individual $x$, SCM $\mathcal{M}$ and norm $\norm{\cdot}$, we define the counterfactual $\epsilon$-neighborhood $B(x)$ of individuals similar to the observed $x$ as the set of causal counterfactuals\footnote{As introduced in Section~\ref{sec:causality}, do not confuse with the notion of ``counterfactual examples'' from the recourse literature.} under all possible $\epsilon$-small additive interventions:
\vspace{-0.2cm}\begin{equation}
    B(x) = \left\{\mathbb{CF}\left(x, \Delta; \mathcal{M}\right) \; | \; \norm{\Delta} \leq \epsilon\right\}
\end{equation}
\end{definition}
\vspace{-0.10cm}As a motivating example, consider the SCM $\mathcal{M}$ with features $X_1=U_1$ and $X_2=X_1 + U_2$ respectively denoting the income and savings of some individual $x$. Figure~\ref{fig:similarity} illustrates the corresponding observational and counterfactual neighborhoods $B(x)$. Observe that the counterfactual neighborhood is skewed towards individuals with both higher (or lower) income and savings, since we would expect perturbations that simultaneously decrease income and increasing savings to be less likely to be observed (i.e., are not well-supported by the causal structure of the data). Therefore, we argue that counterfactual neighborhoods can be more informative, since the causal relationships between features are explicitly considered. Additionally, for non-linear SCMs, counterfactual neighborhoods $B(x)$ adapt to the local geometric structure of the data manifold (see Appendix~\ref{sec:neig}).

\section{The adversarially robust recourse problem}\label{recproblm}

We consider the problem of generating recourse actions that are robust to uncertainty in the features of the individual seeking recourse. 
We adopt a robust optimization viewpoint~\cite{ben2009robust} and model uncertainty in the individual $x$ by characterizing an uncertainty set $B(x)$ of plausible individuals around $x$. We formalize the notion of adversarial robustness of recourse as follows:

\begin{definition}[Adversarially robust recourse action] For some classifier $h$, individual $x\in\mathcal{X}$, and uncertainty set $B(x)$, a recourse action $a$ is adversarially robust if it is valid for all individuals in the uncertainty set~$B(x)$
\begin{equation}
    h\left(\mathbb{CF}\left(x', a\right)\right)=1 \quad \forall x' \in B(x)
\end{equation}
\end{definition}

We additionally define the adversarially robust recourse problem by directly incorporating the above robustness requirement as a constraint to the standard recourse problem:

\begin{definition}[Adversarially robust recourse problem] For some uncertainty set $B(x)$, the minimum-cost recourse action which is adversarially robust is given by
\label{def:robrec}
\begin{equation}
\begin{aligned}
\argmin_{a(x)=do(\mathbf{X}_{\mathcal{I}}=x_{\mathcal{I}}+\mathbf{\theta})} \; & c(x, a)\\
\textrm{s.t.} \quad & a\in\mathcal{F}(x) \\
  &h\left(\mathbb{CF}\left(x', a\right)\right)=1 \;\; \forall x' \in B(x)  \\
\end{aligned}
\end{equation}
\end{definition}

Note that any feasible solution to the above optimization problem is by definition adversarially robust.

\subsection{Recourse is fragile under mild conditions}

We show that under mild conditions on the cost function $c$, feasibility set $\mathcal{F}(x)$ and SCM $\mathcal{M}$, minimum-cost recourse actions are provably fragile to arbitrarily small uncertainty in the features of the individual seeking recourse.

\begin{theorem} Let $a^*$ be the solution to the standard recourse optimization problem defined in Equation~\ref{eq:recourse}. If
\label{th:fragile}
\begin{enumerate}[label=(\roman*),topsep=0pt,itemsep=-1ex,partopsep=1ex,parsep=1ex]
    \item The cost function $c(x, do(X_{\mathcal{I}}=x_{\mathcal{I}}+\theta))$ is strictly convex in $\theta$ with minimum $\theta=0$
    \item $do(X_{\mathcal{I}}=x_{\mathcal{I}}+\theta) \in \mathcal{F}(x)$ $\implies$ \phantom{fillfillfillfillfillfillfillfil} ${do(X_{\mathcal{I}}=x_{\mathcal{I}}+t\theta) \in \mathcal{F}(x) \quad \forall \; 0 < t< 1}$
    \item The SCM $\mathcal{M}$ is an additive noise model~\cite{pearl2009causality}.
\end{enumerate}

Then, for any ${\epsilon>0}$ there exists some plausible individual $x' \in B(x)$ in the $\epsilon$-neighbourhood of similar individuals $B(x)=\left\{\mathbb{CF}\left(x, \Delta\right) | \norm{\Delta} \leq \epsilon\right\}$  such that $a^*$ is not a valid recourse action, i.e., $h(\mathbb{CF}(x', a^*))=0$. That is, the minimum-cost recourse action $a^*$ is fragile to arbitrarily small uncertainty in the individual~$x$ seeking recourse.
\end{theorem}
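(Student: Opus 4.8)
The plan is to exhibit, for any $\epsilon > 0$, a perturbed individual $x' \in B(x)$ whose counterfactual under $a^*$ is driven back to the unfavorable side, by turning the optimality of $a^*$ against it. Write $a^* = do(X_{\mathcal{I}} = x_{\mathcal{I}} + \theta^*)$ and assume the recourse is non-trivial, $\theta^* \neq 0$ (otherwise $h(x) = 1$ and no recourse is sought). First I would record a one-dimensional consequence of optimality: consider the scaled actions $a_t = do(X_{\mathcal{I}} = x_{\mathcal{I}} + t\theta^*)$ for $t \in (0,1)$. Condition (ii) gives $a_t \in \mathcal{F}(x)$, and strict convexity of $\theta \mapsto c(x, do(X_{\mathcal{I}} = x_{\mathcal{I}} + \theta))$ with minimum at $\theta = 0$ (condition (i)) yields $c(x, a_t) < c(x, a^*)$ upon writing $t\theta^*$ as the proper convex combination $t\theta^* + (1-t)\cdot 0$. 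Were $a_t$ valid for any such $t$, it would strictly beat $a^*$; hence by optimality $h(z_t) = 0$ for every $t \in (0,1)$, where $z_t := \mathbb{CF}(x, a_t)$.

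The second and central step is to realize each retracted counterfactual $z_t$ as the counterfactual-under-$a^*$ of a genuinely perturbed individual, which is where the additive-noise assumption (iii) enters. For an additive noise model the exogenous variables of $x' = \mathbb{CF}(x, \Delta)$ satisfy $U' = U + \Delta$, so choosing $\Delta$ supported on $\mathcal{I}$ (i.e.\ $\Delta_j = 0$ for $j \notin \mathcal{I}$) leaves $U'_j = U_j$ off $\mathcal{I}$. I would then select $\Delta_{\mathcal{I}}$ so that $x'_{\mathcal{I}} = x_{\mathcal{I}} - (1-t)\theta^*$; applying $a^*$ to $x'$ sets the intervened coordinates to $x'_{\mathcal{I}} + \theta^* = x_{\mathcal{I}} + t\theta^* = z_{t,\mathcal{I}}$, and since the non-intervened noises agree, a topological-order induction shows $\mathbb{CF}(x', a^*) = z_t$ exactly. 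Solving for $\Delta_{\mathcal{I}}$ is where the graph structure does the work: because each structural equation contributes its own $\Delta_i$ additively and the graph is acyclic, ordering $\mathcal{I}$ topologically makes the dependence of $x'_{\mathcal{I}}$ on $\Delta_{\mathcal{I}}$ triangular with the identity on the diagonal, so it inverts sequentially and hits the target; continuity of the $f_i$ then forces $\Delta \to 0$ as $t \to 1$.

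Finally I would assemble the pieces: choose $t \in (0,1)$ close enough to $1$ that the constructed $\Delta$ obeys $\norm{\Delta} \le \epsilon$, so $x' \in B(x)$, and conclude $h(\mathbb{CF}(x', a^*)) = h(z_t) = 0$, i.e.\ $a^*$ is not adversarially robust. The main obstacle I anticipate is exactly the counterfactual calculus of the second step: verifying that the additive-noise structure lets one both (a) reproduce $z_t$ exactly as $\mathbb{CF}(x', a^*)$ and (b) control $\norm{\Delta}$, since this is the sole place condition (iii) is used and it demands care when propagating the perturbation through nodes that are simultaneously descendants of some intervened variables and ancestors of others. Notably, producing the invalid point $z_t$ \emph{exactly} rather than as a limit lets this route avoid assuming any continuity or boundary structure for the classifier $h$.
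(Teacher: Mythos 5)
Your proof is correct in its essentials, and it takes a genuinely different route from the paper's. The paper argues by contradiction: assuming $a^*$ were robust, it perturbs a single intervened feature $\mathbf{X}_{\mathcal{I}_j}$ chosen so as not to be a causal ancestor of any other intervened feature; for such a ``sink'' coordinate the additive perturbation $\delta = \alpha e^j \sign(\theta^*_j)$ corresponds exactly (no propagation effects) to the modified action $do(\mathbf{X}_{\mathcal{I}} = x_{\mathcal{I}} + \theta^* - \delta)$, which robustness would render valid and which is claimed cheaper---contradicting minimality. You instead run the contrapositive directly: optimality forces every scaled action $a_t = do(\mathbf{X}_{\mathcal{I}} = x_{\mathcal{I}} + t\theta^*)$ to be invalid, and you then realize each invalid counterfactual $z_t$ exactly as $\mathbb{CF}(x', a^*)$ for a perturbed individual, by solving the triangular (identity-on-the-diagonal) system for $\Delta_{\mathcal{I}}$. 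Two trade-offs between the routes are worth recording. First, your whole-vector scaling uses hypotheses (i) and (ii) exactly as stated---strict convexity along the segment from $0$ to $\theta^*$, and feasibility of scalings---whereas the paper's single-coordinate perturbation implicitly requires that $\theta^* - \delta$ be feasible and that $c(x, \theta^*-\delta) < c(x, \theta^*)$, neither of which follows literally from (i)--(ii) for non-separable strictly convex costs; in this respect your argument is a tighter fit to the theorem's stated conditions. Second, and conversely, the paper's sink-coordinate trick needs no regularity of the SCM, while your triangular inversion needs continuity of the structural functions $f_i$ to force $\|\Delta\| \to 0$ as $t \to 1$: this is an assumption beyond ``additive noise model'' as stated (you flag it yourself), though it is mild and holds for the linear and neural-network SCMs the paper actually fits. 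Both arguments ultimately rest on the same ANM fact you isolate: abduction on a perturbed individual gives $U' = U + \Delta$, so a perturbation supported on $\mathcal{I}$ leaves the exogenous variables of non-intervened features untouched, which is what makes a perturbed-individual counterfactual coincide with a modified-action counterfactual.
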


Condition $(i)$ requires that larger feature changes imply strictly more effort. This condition is satisfied by weighted p-norms~\cite{karimi2020algorithmic} and percentile costs~\cite{ustun2019actionable}, the most widely used cost functions in the recourse literature. Condition $(ii)$ states that if it is feasible to change a feature by some amount, then it must also be
feasible to change that feature to a lesser degree. This condition is satisfied by box actionability constrains, commonly assumed in the recourse literature~\cite{karimi2020survey}. Lastly, condition $(iii)$ is a common modelling assumption for estimating the SCM $\mathcal{M}$ from data~\cite{karimi2020algorithmic}, and also holds in the non-causal recourse setting. 

Therefore, in the settings commonly considered by the algorithmic recourse literature, methods seeking minimum-cost recourse offer provably fragile recourse recommendations.

\subsection{On the existence of adversarially robust recourse}\label{sec:conds}

We extend the sufficient conditions for the existence of recourse derived by \citet{ustun2019actionable} to the adversarially robust causal recourse setting. 
First, we state the negative result that even if recourse exists for every individual $x \in \mathcal{X}$, robust recourse may not exist for any individual $x\in\mathcal{X}$.

\begin{proposition}\label{noexistence}
The existence of recourse for all individuals $x \in \mathcal{X}$ is not a sufficient condition for the existence of adversarially robust recourse for any $x \in \mathcal{X}$, even under the strong assumption that all features are actionable.
\end{proposition}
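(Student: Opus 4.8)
The plan is to prove Proposition~\ref{noexistence} by exhibiting an explicit counterexample: a classifier $h$, an SCM $\mathcal{M}$, and an uncertainty radius $\epsilon$ such that recourse exists for every individual in $\mathcal{X}$, yet for no individual does a recourse action remain valid across the entire counterfactual neighborhood $B(x)$. Since the claim is an existence-of-counterexample statement (``is not a sufficient condition''), a single well-chosen instance suffices, and I would deliberately work in the simplest setting --- the IMF (non-causal) case where $\mathbb{CF}(x, do(\mathbf{X}=x+\theta)) = x+\theta$ and all features are actionable --- so that the construction is as transparent as possible.

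First I would construct a classifier whose positive region is \emph{bounded} and \emph{thin} relative to $\epsilon$. The key intuition is that adversarial robustness of a recourse action $a$ requires the \emph{entire} image $\{\mathbb{CF}(x', a) : x' \in B(x)\}$ to land in the positive region $\{h = 1\}$. Under the IMF model, applying the same additive action $a = do(\mathbf{X}=\cdot + \theta)$ to every point of $B(x)$ simply translates the set $B(x)$ by $\theta$, so robustness demands that a translate of $B(x)$ fit entirely inside $\{h=1\}$. If $B(x)$ is an $\epsilon$-ball and the positive region contains no ball of radius $\epsilon$ --- for instance, take $h(x) = 1$ iff $x$ lies in a thin slab $\{x : 0 \le x_1 \le \delta\}$ with slab width $\delta < 2\epsilon$ --- then no translate of the $\epsilon$-ball can be contained in $\{h=1\}$, and hence no adversarially robust action exists for any $x$. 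Meanwhile, ordinary (non-robust) recourse exists for every $x$: one can always translate the single point $x$ into the slab, so the standard recourse problem of Equation~\ref{eq:recourse} is feasible everywhere.

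The main steps, in order, would be: (1) fix $\mathcal{X} = \mathbb{R}^n$, the IMF SCM, full actionability $\mathcal{F}(x) = \{\text{all } do(\mathbf{X}=x+\theta)\}$, and an $\ell_2$ (or any) norm ball $B(x)$ of radius $\epsilon$; (2) define $h$ to have positive region equal to a slab of width strictly less than $2\epsilon$; (3) verify ordinary recourse exists everywhere by noting that for any $x$ the action $\theta$ moving $x$ into the slab is feasible and valid; (4) verify robust recourse fails everywhere by observing that the valid-for-all-$x'$ constraint forces the translated ball $B(x) + \theta$ into the slab, which is geometrically impossible since the ball has diameter $2\epsilon > \delta$. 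I expect the verification in step~(4) to be the crux, and it is essentially a one-line geometric observation: a ball of diameter $2\epsilon$ cannot be contained in a slab of width $\delta < 2\epsilon$, because the ball contains two points separated by $2\epsilon$ along the slab's normal direction, at least one of which must fall outside any slab of smaller width.

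The only genuine subtlety, and hence the part I would be most careful about, is ensuring the counterexample respects the precise definitions in the excerpt --- in particular that $B(x)$ is the \emph{counterfactual} neighborhood and that the action is an additive intervention $do(\mathbf{X}_{\mathcal{I}}=x_{\mathcal{I}}+\theta)$ applied to each perturbed individual. Because the proposition explicitly permits the strong assumption that all features are actionable and does not require causal structure, working in the IMF setting is legitimate and sidesteps any complication from $\mathbb{CF}$ being a nontrivial map; the counterfactual neighborhood then coincides with the ordinary $\epsilon$-ball and the translation argument applies verbatim. I would close by remarking that the construction is robust to the choice of norm and dimension, so the impossibility is not an artifact of the particular geometry chosen.
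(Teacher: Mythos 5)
Your proposal is correct and takes essentially the same route as the paper: the paper's proof (Example~\ref{ex1}) also exhibits an IMF counterexample with an $\ell_2$ uncertainty ball, using the classifier $h(x)=\sin(2\gamma\pi^{-1}x_2)\geq 0$ with $0<\gamma<\epsilon$, whose positive region is a union of stripes each too thin to contain a translate of the $\epsilon$-ball --- exactly your slab argument, just with periodic stripes instead of a single slab. Your observation that an additive action merely translates $B(x)$, so robustness requires fitting a ball of diameter $2\epsilon$ inside the thin positive region, is precisely the geometric crux of the paper's example.
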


Intuitively, for robust recourse to exist the decision-making classifier must be minimally robust in the sense that there must exist at least one individual which is robustly classified. In this sense, robustness of prediction is necessary (but not sufficient) for the existence of robust recourse. Under mild robustness conditions on the classifier, all features being actionable is sufficient for the existence of robust recourse.

\begin{proposition}\label{existence1}
For an SCM $\mathcal{M}$ with linear structural equations, if all features are actionable and there exists some robustly classified individual $x^+ \in \mathcal{X}$ such that  $h(x') = 1 \; \forall x'\in B(x^+)$, then there exists an adversarially robust recourse action for every individual $x \in \mathcal{X}$.
\end{proposition}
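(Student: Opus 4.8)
The plan is to exploit two structural simplifications that hold under the proposition's hypotheses and then reduce robust recourse to the assumed existence of a single robustly classified point. First I would make the counterfactual map explicit for a linear SCM. Writing the linear structural equations in reduced form $X = WX + U$ (up to an additive constant that will cancel), the endogenous map is $\mathbb{S}(U) = (I-W)^{-1}U$ and its inverse is $\mathbb{S}^{-1}(x) = (I-W)x$. An additive intervention $\Delta$ replaces $U$ by $U+\Delta$, giving $\mathbb{S}^{\Delta}(U) = (I-W)^{-1}(U+\Delta)$. Composing yields the translation identity $\mathbb{CF}(x,\Delta) = x + A\Delta$ with $A := (I-W)^{-1}$. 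The immediate consequence is that every counterfactual neighborhood is a translate of one fixed set, $B(x) = x + \{A\Delta : \norm{\Delta}\le\epsilon\}$, so $B(x)$ and $B(x^+)$ differ only by the translation $x^+ - x$.

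Second, I would use that all features are actionable together with the hard-intervention semantics of recourse. When $\mathcal{I}=[n]$, the action $a(x') = do(\mathbf{X}=x'+\theta)$ overrides every structural equation and fixes all features to their target, so $\mathbb{CF}(x', a(x')) = x' + \theta$ regardless of the exogenous variables or the causal structure. Thus validity of $a$ on the uncertainty set reduces to the condition $h(x' + \theta) = 1$ for all $x'\in B(x)$.

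With these two facts in hand the construction is immediate: given the robustly classified $x^+$, choose the prescribed change $\theta = x^+ - x$, which is feasible precisely because all features are actionable. Then for any $x' \in B(x)$, write $x' = x + A\Delta$ with $\norm{\Delta}\le\epsilon$, so that $\mathbb{CF}(x', a(x')) = x' + \theta = x^+ + A\Delta$. As $\Delta$ ranges over the $\epsilon$-ball this sweeps out exactly $B(x^+)$, on which $h \equiv 1$ by hypothesis; hence $h(\mathbb{CF}(x', a(x')))=1$ for all $x' \in B(x)$, and $a$ is adversarially robust. Since $x\in\mathcal{X}$ was arbitrary, robust recourse exists for every individual.

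I expect the only real obstacle to be the bookkeeping in the first two steps: deriving the translation identity $\mathbb{CF}(x,\Delta)=x+A\Delta$, and carefully distinguishing the fixed prescribed change $\theta$ from the individual-dependent action $a(x')=do(\mathbf{X}=x'+\theta)$. Once one recognizes that (a) linear counterfactual neighborhoods are mutual translates and (b) hard-intervening on all features decouples the counterfactual from the causal structure so that only this translation structure matters, the proposition follows from the single choice $\theta = x^+ - x$ with no further estimates required.
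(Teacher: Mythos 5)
Your proposal is correct and follows essentially the same route as the paper's proof: choose $\theta = x^+ - x$, use the fact that hard-intervening on all (actionable) features gives $\mathbb{CF}(x', a(x')) = x' + \theta$, and invoke linearity of the SCM to conclude that the counterfactual image of $B(x)$ is exactly the translate $B(x^+)$, on which $h \equiv 1$ by hypothesis. The only difference is presentational: you make explicit the reduced-form identity $\mathbb{CF}(x,\Delta) = x + (I-W)^{-1}\Delta$, which the paper leaves implicit in the remark ``the last equality holds since the SCM $\mathcal{M}$ is linear.''
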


Intuitively, one can then require all negatively classified individuals to act such that their features resemble $x^+$. Note that the above statement holds only for linear SCMs, since for non-linear SCMs the shape of the uncertainty set $B(x)$ differs between individuals $x$, and thus it is not possible to state a single robustness requirement for $h$ at~$x^+$.

By further assuming that the classifier $h$ is linear, it is possible to relax the condition of all features being actionable to a single feature being actionable and unbounded:

\begin{proposition}\label{existence2}
For a linear classifier $h$ and linear SCM $\mathcal{M}$, under mild conditions on the weights of the classifier described in Appendix~\ref{sec:proofex2}, if there exists a feature $\mathbf{X}_j$ that is actionable and unbounded, then there exists an adversarially robust recourse action for every $x \in \mathcal{X}$. 
\end{proposition}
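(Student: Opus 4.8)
The plan is to exploit the fact that, for a linear SCM, every map appearing in the robustness constraint is affine, so the requirement collapses to a single scalar inequality that an unbounded actionable feature can always satisfy.

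First I would make the affine structure explicit. For a linear SCM the additive-intervention counterfactual is affine, $\mathbb{CF}(x, \Delta) = x + M\Delta$ for a fixed matrix $M$ that propagates perturbations along the causal graph, so the uncertainty set $B(x) = \{x + M\Delta : \norm{\Delta} \le \epsilon\}$ is an ellipsoid centered at $x$. Likewise, applying the additive action $a = do(\mathbf{X}_{\mathcal{I}} = x'_{\mathcal{I}} + \theta)$ to any $x' \in B(x)$ yields an affine counterfactual $\mathbb{CF}(x', a) = x' + M_{\mathcal{I}}\theta$, where $M_{\mathcal{I}}$ is the fixed, $x'$-independent matrix of downstream causal effects of intervening on $\mathbf{X}_{\mathcal{I}}$. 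The one computation requiring care here is deriving $M_{\mathcal{I}}$ for a hard intervention in a linear SCM and verifying that, precisely because the action is \emph{additive}, the dependence on $x'$ enters the counterfactual only through the $x'$ term and not through the $\theta$ term.

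Next I would linearize the classifier, writing $h(z) = 1 \iff w^\top z + b \ge 0$, and substitute. The robust-validity requirement $h(\mathbb{CF}(x', a)) = 1$ for all $x' \in B(x)$ then reads $w^\top(x + M\Delta + M_{\mathcal{I}}\theta) + b \ge 0$ for all $\norm{\Delta} \le \epsilon$. Taking the worst case over $\Delta$ via the standard robust-optimization duality (the minimum of $w^\top M\Delta$ over the $\epsilon$-ball is $-\epsilon\norm{M^\top w}_*$ in the dual norm) reduces this to the single inequality $w^\top x + w^\top M_{\mathcal{I}}\theta + b - \epsilon\norm{M^\top w}_* \ge 0$.

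Finally I would specialize to $\mathcal{I} = \{j\}$, the actionable unbounded feature, so that $w^\top M_{\{j\}}\theta = \gamma_j\theta$ with $\gamma_j := w^\top m_j$ the total causal effect of feature $j$ on the classifier score. The promised \emph{mild condition on the weights} is exactly $\gamma_j \ne 0$. Under it the constraint becomes $\gamma_j\theta \ge \epsilon\norm{M^\top w}_* - w^\top x - b$, whose right-hand side is finite; since $X_j$ is unbounded and actionable, choosing $\theta$ with the sign of $\gamma_j$ and sufficiently large magnitude yields a feasible action satisfying the inequality for the given $x$, and since $x$ was arbitrary, robust recourse exists for every individual. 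The main obstacle is the bookkeeping of the first step — correctly assembling $M$ and $M_{\mathcal{I}}$ and confirming that the additive action decouples $\theta$ from $x'$ — together with pinning down precisely which scalar being nonzero constitutes the mild condition.
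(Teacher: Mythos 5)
Your proposal is correct and takes essentially the same route as the paper: you inline the reduction that the paper delegates to its Proposition~\ref{linear} (the affine counterfactual structure $\mathbb{CF}(x',a) = x + M\Delta + M_{\mathcal{I}}\theta$ together with the dual-norm worst case over $\norm{\Delta}\leq\epsilon$, yielding a shifted acceptance threshold) and then conclude exactly as the paper does, by driving the single-feature intervention $\theta$ to arbitrarily large magnitude with the sign of the total causal effect. Your mild condition $\gamma_j = w^\top m_j \neq 0$ is precisely the paper's condition $\inner{w,v}\neq 0$, with $v$ the total-effect vector of intervening on $\mathbf{X}_j$.
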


\section{Generating adversarially robust recourse}

\subsection{The linear case}

For a linear classifier ${h(x) = \inner{w, x} \geq b}$ and linear SCM, we show that generating robust recourse for the classifier $h$ is equivalent to generating standard recourse for a modified linear classifier ${h'(x) = \inner{w, x} \geq b'}$ whose ``acceptance threshold'' is sufficiently increased (i.e., $b'\geq b$).

\begin{proposition} \label{linear}
Let $h(x) = \inner{w, x} \geq b$ be a linear classifier, $\mathcal{M}$ an SCM with linear structural equations, and $B(x) = \left\{\mathbb{CF}\left(x,\Delta\right) \; | \; \norm{\Delta} \leq \epsilon\right\}$ an uncertainty set of plausible individuals. Then, an action $a(x)=do(\mathbf{X}_{\mathcal{I}}=x_{\mathcal{I}}+\theta)$ is an adversarially robust recourse action if and only $a$ is a valid recourse action for the following modified classifier:
\begin{equation}
    h'(x) = \inner{w, x} \geq b + \norm{J^T_{\mathbb{S}^\mathcal{I}}w}^*\epsilon
    \label{eq:p13}
\end{equation}
where $\norm{\cdot}^*$ denotes the dual norm of $\norm{\cdot}$ and $J_{\mathbb{S}^\mathcal{I}}$ denotes the Jacobian of the interventional mapping resulting from hard-intervening on a subset of features $\mathbf{X}_{\mathcal{I}}$.
\end{proposition}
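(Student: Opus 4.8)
The plan is to collapse the infinite family of validity constraints $h(\mathbb{CF}(x', a)) = 1$, $\forall x' \in B(x)$, into a single worst-case inequality, and then to exploit the fact that, for a linear classifier and a linear SCM, every map involved is affine. First I would rewrite the robustness requirement as $\inner{w, \mathbb{CF}(x', a)} \geq b$ for all $x' \in B(x)$ and, parametrizing the uncertainty set by its generating perturbation, as $\inner{w, \mathbb{CF}(\mathbb{CF}(x, \Delta), a)} \geq b$ for all $\Delta$ with $\norm{\Delta} \leq \epsilon$. Since any feasible point of this system is by definition an adversarially robust action, it suffices to characterize when the worst-case value $\min_{\norm{\Delta} \leq \epsilon} \inner{w, \mathbb{CF}(\mathbb{CF}(x, \Delta), a)}$ is at least $b$.

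The central step is to show that the composite counterfactual map $\Delta \mapsto \mathbb{CF}(\mathbb{CF}(x, \Delta), a)$ is affine in $\Delta$. For a linear SCM the observational map $\mathbb{S}$ and its inverse are affine, so the additive-intervention counterfactual $\mathbb{CF}(x, \Delta)$ is affine in $\Delta$; and the additive recourse action $do(\mathbf{X}_{\mathcal{I}} = x'_{\mathcal{I}} + \theta)$, being a hard intervention followed by causal propagation through linear structural equations, yields a counterfactual that is affine in the pre-action individual $x'$. Composing the two (abduction--action--prediction carried through both the perturbation and the severed edges of the intervention) gives an affine map whose Jacobian is $J_{\mathbb{S}^\mathcal{I}}$, the Jacobian of the interventional mapping under the action on $\mathcal{I}$. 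Evaluating at $\Delta = 0$, which returns $x' = x$, fixes the constant term, so that $\mathbb{CF}(\mathbb{CF}(x, \Delta), a) = \mathbb{CF}(x, a) + J_{\mathbb{S}^\mathcal{I}} \Delta$. I expect this bookkeeping --- tracking how a perturbation of the exogenous noise propagates through both the additive perturbation and the intervened structural equations, and verifying that the resulting Jacobian is exactly $J_{\mathbb{S}^\mathcal{I}}$ --- to be the main obstacle, since it requires care about the additive (rather than absolute) form of the action, which is precisely what lets the uncertainty propagate rather than be trivially shielded.

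Given the affine form, the remainder is a one-line robust-optimization argument. Writing $\inner{w, \mathbb{CF}(\mathbb{CF}(x, \Delta), a)} = \inner{w, \mathbb{CF}(x, a)} + \inner{J_{\mathbb{S}^\mathcal{I}}^T w, \Delta}$, the worst case over the ball is attained at the dual-norm maximizer, giving $\min_{\norm{\Delta} \leq \epsilon} \inner{w, \mathbb{CF}(\mathbb{CF}(x, \Delta), a)} = \inner{w, \mathbb{CF}(x, a)} - \epsilon \norm{J_{\mathbb{S}^\mathcal{I}}^T w}^*$. Requiring this quantity to be at least $b$ is equivalent to $\inner{w, \mathbb{CF}(x, a)} \geq b + \epsilon \norm{J_{\mathbb{S}^\mathcal{I}}^T w}^*$, which is precisely the statement that $a$ is a valid recourse action for the modified classifier $h'$ of Equation~\ref{eq:p13}. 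Because the dual-norm bound is tight --- the minimizing $\Delta$ attains the boundary of the ball --- the implication holds in both directions, establishing the claimed equivalence.
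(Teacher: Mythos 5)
Your proposal is correct and follows essentially the same route as the paper's proof: reduce the robustness constraint to a worst-case minimization over the perturbation ball, establish the affine identity $\mathbb{CF}\left(\mathbb{CF}(x,\Delta),a\right)=\mathbb{CF}(x,a)+J_{\mathbb{S}^\mathcal{I}}\Delta$, and evaluate the worst case via the dual norm to obtain the shifted threshold $b+\norm{J^T_{\mathbb{S}^\mathcal{I}}w}^*\epsilon$. The only difference is presentational: the paper verifies the affine identity by explicitly composing the abduction and interventional maps (showing that abduction of the perturbed individual returns $\mathbb{S}^{-1}(x)+\Delta$, after which $\mathbb{S}^{a}$ is applied), whereas you obtain the same identity by an affinity/chain-rule argument with the constant term fixed by evaluating at $\Delta=0$.
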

\begin{corollary} For any given individual $x\in\mathcal{X}$, the minimum-cost adversarially robust recourse action for the original classifier $h$ is equivalent to the minimum-cost standard recourse action for the modified classifier~$h'$.
\end{corollary}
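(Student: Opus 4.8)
The plan is to collapse the infinite family of validity constraints $h(\mathbb{CF}(x',a))=1$ for all $x'\in B(x)$ into a single worst-case scalar inequality, using two facts: that $h$ is linear, and that for a linear SCM the map sending a perturbation $\Delta$ to the counterfactual of the perturbed individual under action $a$ is \emph{affine}, so that its first-order expansion is exact. Once the constraint is rewritten as an affine functional of $\Delta$ that must stay nonnegative over the ball $\norm{\Delta}\leq\epsilon$, the tightest (worst) $\Delta$ is characterized exactly by the dual norm, which is precisely the quantity appearing in the threshold of $h'$. The substance lies in establishing this equivalence (the proposition); the corollary then follows in one line.

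First I would make the uncertainty set and the relevant map concrete. Writing the linear structural equations as $X = AX+U$ with $A$ strictly lower-triangular in a topological order, the observational map is $\mathbb{S}(U)=(I-A)^{-1}U$ with $\mathbb{S}^{-1}(x)=(I-A)x$. An additive intervention shifts only the exogenous variables, so $\mathbb{CF}(x,\Delta)=x+(I-A)^{-1}\Delta$ and hence $B(x)=\{x+(I-A)^{-1}\Delta : \norm{\Delta}\leq\epsilon\}$. Fixing $\Delta$ and setting $x'=\mathbb{CF}(x,\Delta)$, I would then compute $\mathbb{CF}(x',a)$ for $a=do(\mathbf{X}_{\mathcal{I}}=x'_{\mathcal{I}}+\theta)$ via the usual three steps — abduction $U'=\mathbb{S}^{-1}(x')$, fixing $\mathbf{X}_{\mathcal{I}}$ to $x'_{\mathcal{I}}+\theta$, and propagating through the mutilated linear equations — and observe that each step is affine in $x'$, hence affine in $\Delta$. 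This is the technical heart and the main obstacle: one must track that $x'$ enters the counterfactual twice, once through the abducted noise and once through the intervention target $x'_{\mathcal{I}}+\theta$, and then partition into intervened and non-intervened blocks to solve the mutilated system and read off the constant Jacobian. Denoting that Jacobian $J_{\mathbb{S}^{\mathcal{I}}}$, the composition is exactly $\mathbb{CF}(\mathbb{CF}(x,\Delta),a)=\mathbb{CF}(x,a)+J_{\mathbb{S}^{\mathcal{I}}}\Delta$, whose constant term is, by evaluation at $\Delta=0$, the nominal counterfactual $\mathbb{CF}(x,a)$.

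Given affineness, the validity constraint for a fixed $\Delta$ reads
\[
\inner{w,\mathbb{CF}(x',a)} = \inner{w,\mathbb{CF}(x,a)} + \inner{J_{\mathbb{S}^{\mathcal{I}}}^T w,\,\Delta} \geq b,
\]
and $a$ is adversarially robust exactly when this holds for every $\norm{\Delta}\leq\epsilon$, i.e. when the minimum of the right-hand side over the ball is at least $b$. By the definition of the dual norm, $\min_{\norm{\Delta}\leq\epsilon}\inner{J_{\mathbb{S}^{\mathcal{I}}}^T w,\Delta} = -\epsilon\,\norm{J_{\mathbb{S}^{\mathcal{I}}}^T w}^*$, so robustness is equivalent to
\[
\inner{w,\mathbb{CF}(x,a)} \geq b + \epsilon\,\norm{J_{\mathbb{S}^{\mathcal{I}}}^T w}^*,
\]
which is exactly the assertion that the nominal counterfactual $\mathbb{CF}(x,a)$ is accepted by $h'$, i.e. that $a$ is a valid recourse action for $h'$. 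Since the dual-norm identity yields the exact minimizer, both implications hold and the equivalence is a genuine ``if and only if''.

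Finally, the corollary is immediate: the robust recourse problem for $h$ and the standard recourse problem for $h'$ share the same objective $c(x,a)$ and the same feasibility set $\mathcal{F}(x)$, and by the equivalence just shown their validity constraints cut out the same feasible region; hence the two optimization problems coincide, and so do their minimum-cost solutions.
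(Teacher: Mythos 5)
Your proposal is correct and follows essentially the same route as the paper: both establish that for a linear SCM the perturbed counterfactual satisfies $\mathbb{CF}(\mathbb{CF}(x,\Delta),a)=\mathbb{CF}(x,a)+J_{\mathbb{S}^{\mathcal{I}}}\Delta$ (the paper via abstract composition of the maps $\mathbb{S}^{a}$, $\mathbb{S}^{-1}$, $\mathbb{S}^{\Delta}$; you via the explicit matrix form of the structural equations), then collapse the worst case over $\norm{\Delta}\leq\epsilon$ with the dual-norm identity to obtain the shifted threshold $b+\epsilon\norm{J^T_{\mathbb{S}^{\mathcal{I}}}w}^*$, from which the corollary follows since both optimization problems share the objective $c(x,a)$ and feasibility set $\mathcal{F}(x)$. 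Your explicit tracking of the affine dependence on $\Delta$ (including the fact that $x'$ enters both through abduction and through the intervention target) is, if anything, a more careful rendering of the step the paper writes somewhat loosely as $\mathbb{S}^{a}(\mathbb{S}^{-1}(x)+\Delta)=\mathbb{S}^{a}(\mathbb{S}^{-1}(x))+\mathbb{S}^{a}(\Delta)$.
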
 

Therefore, in the linear setting, per Corollary~1 any method for generating standard recourse can be readily used to generate adversarially robust recourse by simply considering the modified classifier $h'$. In such cases, adversarial robustness can be straightforwardly embedded within methods seeking to promote other desiderata, such as large data-support~\cite{joshi2019towards} or fairness constraints~\cite{gupta2019equalizing, von2020fairness}.

\subsection{The differentiable case}

We now consider the setting where the classifier $h$ and SCM $\mathcal{M}$ are differentiable. First, we derive an unconstrained penalty problem that is equivalent to the adversarially robust recourse problem. We then discuss how to efficiently solve said unconstrained optimization problem.

\begin{proposition} \label{differentiable}
Let $h(x) = \tilde h(x) \geq b$ for differentiable $\tilde h: \mathcal{X} \rightarrow \left[0, 1\right]$. The adversarially robust recourse problem is equivalent to the following unconstrained problem:
\begin{equation*}\label{eq:maxminmax}
    \hspace*{-0.4cm}\min_{a\in\mathcal{F}(x)} \max_{\lambda \geq 0} \; c(x,a) + \lambda  \left(\log b + \max_{x'\in B(x)} \ell \left(\tilde h\left(\mathbb{CF}\left(x', a\right)\right), \mathbf{1}\right)\right)
\end{equation*}
where $\ell$ is the binary cross-entropy loss. 
\end{proposition}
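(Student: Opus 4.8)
The plan is to show the two optimization problems share the same feasible set and the same optimal solutions by analyzing the inner penalty term and exploiting the fact that the constraint $h(\mathbb{CF}(x',a))=1$ for all $x'\in B(x)$ is equivalent to a single worst-case condition on $\tilde h$. First I would rewrite the hard constraint $h(\mathbb{CF}(x',a))=1 \;\forall x'\in B(x)$ as $\tilde h(\mathbb{CF}(x',a)) \geq b \;\forall x'\in B(x)$, which is in turn equivalent to the scalar condition $\min_{x'\in B(x)} \tilde h(\mathbb{CF}(x',a)) \geq b$. Taking logarithms (valid since $\tilde h$ and $b$ are positive) and noting that the binary cross-entropy loss against the all-ones target satisfies $\ell(\tilde h, \mathbf 1) = -\log \tilde h$, the worst-case individual $x'$ that minimizes $\tilde h$ is exactly the one that maximizes $\ell$, so the constraint becomes
\begin{equation*}
\log b + \max_{x'\in B(x)} \ell\!\left(\tilde h\!\left(\mathbb{CF}\left(x',a\right)\right), \mathbf 1\right) \leq 0.
\end{equation*}

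Next I would invoke standard Lagrangian duality. Define the scalar penalty
\begin{equation*}
g(a) := \log b + \max_{x'\in B(x)} \ell\!\left(\tilde h\!\left(\mathbb{CF}\left(x',a\right)\right), \mathbf 1\right),
\end{equation*}
so that the robust recourse problem is $\min_{a\in\mathcal F(x)} c(x,a)$ subject to $g(a)\leq 0$. The inner maximization $\max_{\lambda\geq 0}\, c(x,a) + \lambda\, g(a)$ evaluates to $c(x,a)$ when $g(a)\leq 0$ and to $+\infty$ when $g(a)>0$; this is precisely the exact-penalty/indicator reformulation of a single inequality constraint. Substituting this into the outer minimization over $a\in\mathcal F(x)$ therefore recovers the constrained problem, since any $a$ violating the constraint is assigned infinite objective value and is thus never selected by the minimization, while on the feasible region the objective reduces to $c(x,a)$. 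This establishes that the set of minimizers coincides.

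The main obstacle I anticipate is making the interchange of $\min_a$ and $\max_\lambda$ rigorous, i.e. justifying that we may move the $\max_{\lambda\geq 0}$ inside without introducing a duality gap. Here the argument is not a genuine strong-duality/minimax-swap claim but rather the simpler observation that the written expression is literally $\min_{a\in\mathcal F(x)}\big[\max_{\lambda\geq 0} c(x,a)+\lambda g(a)\big]$, where the inner max is taken pointwise in $a$ and evaluates to the indicator-augmented objective described above; no swap of the outer and inner operators is actually required. I would also note carefully that the two problems have the same value only if we interpret the minimization as ranging over all $a\in\mathcal F(x)$ (so that infeasible actions are excluded both by the original constraint and by the $+\infty$ penalty), and I would confirm that the $\max_{x'\in B(x)}$ is attained — this follows from continuity of $\tilde h$, $\mathbb{CF}(\cdot,a)$ and $\ell$ together with compactness of $B(x)$, which holds since $B(x)$ is the continuous image of the compact $\epsilon$-ball $\{\Delta : \norm{\Delta}\leq\epsilon\}$ under the counterfactual mapping. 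With these points addressed, equivalence of the two problems follows.
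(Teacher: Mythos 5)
Your proposal is correct and follows essentially the same route as the paper: rewrite the robust validity constraint as $\min_{x'\in B(x)}\tilde h(\mathbb{CF}(x',a))\geq b$, take logarithms, identify $-\log\tilde h$ with the cross-entropy loss $\ell(\tilde h,\mathbf{1})$ to obtain the scalar constraint $\log b + \max_{x'\in B(x)}\ell(\tilde h(\mathbb{CF}(x',a)),\mathbf{1})\leq 0$, and then pass to the Lagrangian min-max formulation. The only difference is cosmetic: the paper simply cites standard convex-optimization references for the last step, while you spell out the pointwise exact-penalty argument (the inner $\max_{\lambda\geq 0}$ yields $c(x,a)$ on the feasible set and $+\infty$ otherwise) and note attainment of the inner maximum, which if anything makes your write-up slightly more self-contained.
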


\begin{algorithm}[t]
\caption{Generate adversarially robust recourse for a differentiable classifier $h$ and differentiable SCM $\mathcal{M}$.}
\label{alg:adv}
\begin{algorithmic}[1]
\REQUIRE Factual individual $x$, uncertainty set $B(x)$, subset $\mathcal{I}$ of intervened-upon features $x_{\mathcal{I}}$ , $\lambda > 0$, $\gamma > 1$, $\theta = 0$
\WHILE{$N \leq N_{\max}$}
\WHILE{not converged}
\STATE $a(x) \gets do(X_{\mathcal{I}}=x_{\mathcal{I}}+\theta)$
\STATE $x^* \gets \argmax_{x'\in B(x)} \ell\left(\tilde h(\mathbb{CF}(x', a)), \mathbf{1}\right)$
\IF{$h\left(\mathbb{CF}\left(x^*, a\right)\right)=1$}
    \STATE \textbf{return} $\theta$
\ENDIF
\STATE $g \gets \nabla_{\theta}\left(c(x, a) + \lambda \; \ell \left(\tilde h(\mathbb{CF}(x^*, a)), \mathbf{1}\right)\right)$
\STATE $\theta \gets \textrm{Proj}_{\mathcal{F}(x)}(\theta - \alpha g)$
\ENDWHILE
\STATE $\lambda \gets \gamma \lambda$
\ENDWHILE
\end{algorithmic}
\end{algorithm}
\setlength{\textfloatsep}{12pt}

To solve the outer minimax optimization problem, we adopt the causal recourse approach of \citet{karimi2020algorithmic} and use projected gradient descent over the recourse action $a$ and feasibility set $\mathcal{F}(x)$, while iteratively increasing $\lambda$ to place growing emphasis in crossing the classifier's decision boundary, as described in Algorithm~\ref{alg:adv}. Note that if there is no uncertainty in the features of the individual~$x$, that is $B(x)=\{x\}$, then the optimization procedure is precisely that of \citet{karimi2020algorithmic} and \citet{wachter2017counterfactual} for the causal and non-casual recourse settings respectively, since the solution to the inner maximization is trivially $x^*=x$. 

We solve the inner maximization in Proposition~\ref{differentiable} using projected gradient ascent over the uncertainty set~$B(x)$. Since this inner maximization problem is in general non-convex, the local maximum $x^*$ found with gradient ascent may not be the global maximum in $B(x)$. If $x^*$ is not the global maximum, then in Algorithm~1 the exit condition $h\left(\mathbb{CF}\left(x^*, a\right)\right)=1$ does not imply that ${h\left(\mathbb{CF}\left(x', a\right)\right)=1} \; \forall x' \in B(x)$. Thus, in general it is not possible to guarantee that the recourse actions returned by the proposed algorithm are adversarially robust. However, as discussed in the experiments section, we empirically find that, for sufficiently small uncertainty $\epsilon$, the proposed algorithm is effective in generating robust recourse actions.

\section{Actionability regularization}

While the robustness of recourse is desirable for both the decision-maker and the decision-subject, the burden of immunizing recourse against uncertainty falls solely on the decision-subject: robust recourse is more effortful (minimum-cost recourse is provably fragile) and may not even exist (Proposition~\ref{noexistence}). To regulate the burden of robustness between the decision-maker and the decision-subject, we propose to regularize the decision-making classifier at training time with the aim of 1) facilitating the existence of robust recourse, and 2) reducing the extra cost of seeking robust recourse. Regularizing the classifier to promote such desiderata may come at a cost in predictive performance, thus shifting part of the burden of robust recourse from the decision-subject to the decision-maker. We propose to regularize the decision-making classifier to behave locally linearly and to rely more strongly on actionable features. 

\subsection{Theoretical motivation}

To motivate our proposed regularizer, we refer to the sufficient conditions for the existence of adversarially robust recourse presented in Section~\ref{sec:conds}. Per Proposition~\ref{existence1} and Proposition~\ref{existence2}, we hypothesize, respectively, that using only actionable features for classification and using a linear classifier facilitates the existence of robust recourse. However, often times most available features are unactionable, and thus classifiers trained only with actionable features will exhibit poor predictive performance. Similarly,  the predictive performance of linear classifiers is generally inferior to that of nonlinear classifiers. As a middle ground, we propose to use all available features to train a potentially non-linear classifier, but regularize the classifier to rely more strongly on the actionable features and to locally behave linearly.

Such choice of regularization is additionally well-motivated from the viewpoint of reducing the extra cost of robustifying recourse, for which we derive the following upper bound:

\begin{proposition}\label{th:cost}
Let $h(x)$ be a linear classifier $\inner{w, x} \geq b$, $x \in \mathcal{X}$ a negatively classified individual for which there exists some recourse action $a(x)=do(\mathbf{X}_{\mathcal{I}}=x_{\mathcal{I}}+\theta)$, and $B(x)=\left\{\mathbb{CF}(x, \Delta) \; | \; \norm{\Delta} \leq \epsilon\right\}$ the uncertainty set. If the features $X_{\mathcal{I}}$ are unbounded and independently manipulable, and the cost function is subadditive, then there exists some adversarially robust recourse action $a'$ such that the extra cost of robustifying recourse is upper bounded by
\begin{equation}
  \label{eq:beta}
  \dfrac{c(x, a') - c(x, a)}{c(x, a)} \leq \dfrac{\norm{m_{\mathcal{A}}\odot w}^*+\norm{m_{\tilde{\mathcal{A}}}\odot w}^*}{\inner{m_{\mathcal{A}}\odot w, \theta}} \epsilon
\end{equation}
where $m_{\mathcal{A}} \in [0,1]^n$ (resp. $m_{\tilde{\mathcal{A}}}$) is the mask vector for the set of actionable features $\mathcal{A}$ (resp. unactionable features~$\tilde{\mathcal{A}}$), and $\norm{\cdot}^*$ is the dual norm of $\norm{\cdot}$. 
\end{proposition}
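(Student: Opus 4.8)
The plan is to reduce everything to a one-dimensional scaling argument along the given action direction $\theta$ and then control the extra cost by the (sub)additivity of $c$. Since the features are independently manipulable, the counterfactual maps collapse: $\mathbb{CF}(x,\Delta)=x+\Delta$ for every $\norm{\Delta}\le\epsilon$, and an action $a'(x)=do(\mathbf{X}_{\mathcal{I}}=x_{\mathcal{I}}+\theta')$ applied to a perturbed individual $x'=x+\Delta$ (recall that the additive action target inherits the uncertainty through $x'_{\mathcal{I}}$) yields $\mathbb{CF}(x',a')=x+\Delta+\theta'$, where I treat $\theta'$ as a full-dimensional vector supported on the actionable coordinates so that $\inner{m_{\mathcal{A}}\odot w,\theta'}=\inner{w,\theta'}$. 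Hence $h(\mathbb{CF}(x',a'))=1$ for all $\norm{\Delta}\le\epsilon$ is equivalent to $\inner{w,x}+\inner{m_{\mathcal{A}}\odot w,\theta'}+\min_{\norm{\Delta}\le\epsilon}\inner{w,\Delta}\ge b$, and since $\min_{\norm{\Delta}\le\epsilon}\inner{w,\Delta}=-\epsilon\norm{w}^*$ this says exactly that $a'$ is valid for the shifted threshold $b+\epsilon\norm{w}^*$, consistent with Proposition~\ref{linear} specialized to the IMF (identity-Jacobian) case.

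First I would replace this exact requirement by a stronger, more interpretable sufficient condition, using $\norm{w}^*\le\norm{m_{\mathcal{A}}\odot w}^*+\norm{m_{\tilde{\mathcal{A}}}\odot w}^*$ (subadditivity of the dual norm applied to the split $w=m_{\mathcal{A}}\odot w+m_{\tilde{\mathcal{A}}}\odot w$). Writing $R:=\epsilon\left(\norm{m_{\mathcal{A}}\odot w}^*+\norm{m_{\tilde{\mathcal{A}}}\odot w}^*\right)$, any $\theta'$ with $\inner{m_{\mathcal{A}}\odot w,\theta'}\ge b-\inner{w,x}+R$ is then robust. I would construct $a'$ by scaling the original action, $\theta'=\beta\theta$ with $\beta=1+\frac{R}{\inner{m_{\mathcal{A}}\odot w,\theta}}$; unboundedness of $X_{\mathcal{I}}$ guarantees the scaled action stays feasible. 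Validity of the original action, $\inner{w,x}+\inner{m_{\mathcal{A}}\odot w,\theta}\ge b$, together with $x$ being negatively classified ($\inner{w,x}<b$), gives $\inner{m_{\mathcal{A}}\odot w,\theta}\ge b-\inner{w,x}>0$, so $\beta>1$ is well defined and $\inner{m_{\mathcal{A}}\odot w,\beta\theta}=\inner{m_{\mathcal{A}}\odot w,\theta}+R\ge b-\inner{w,x}+R$; thus $a'$ satisfies the sufficient condition and is adversarially robust.

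It then remains to bound the relative extra cost. Writing $\beta\theta=\theta+(\beta-1)\theta$ and using subadditivity of the cost in the intervention magnitude, $c(x,a')-c(x,a)\le c\big(x,do(\mathbf{X}_{\mathcal{I}}=x_{\mathcal{I}}+(\beta-1)\theta)\big)$, so the relative overhead is at most this quantity divided by $c(x,a)$. The main obstacle is precisely this final step: recovering the claimed factor $\beta-1=\frac{R}{\inner{m_{\mathcal{A}}\odot w,\theta}}$ requires $c\big(x,do(\mathbf{X}_{\mathcal{I}}=x_{\mathcal{I}}+(\beta-1)\theta)\big)\le(\beta-1)\,c(x,a)$, which subadditivity alone does not deliver at a non-integer scale. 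The positive homogeneity enjoyed by the weighted $p$-norm costs of interest (for which $c(x,do(\mathbf{X}_{\mathcal{I}}=x_{\mathcal{I}}+s\theta))=s\,c(x,a)$ for $s\ge 0$) closes this gap and in fact makes the scaling construction tight. I would therefore make the use of the cost's homogeneity explicit here, and confirm that $\beta\theta$ is the cheapest scalar multiple of $\theta$ attaining the robustness margin $R$, which completes the bound in Equation~\ref{eq:beta}.
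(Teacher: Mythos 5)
Your proposal follows essentially the same route as the paper's proof: robustify by scaling the given intervention along $\theta$, invoke the linear-case characterization (Proposition~\ref{linear}) so that robustness becomes validity against a threshold raised by the dual norm of the weight vector, choose the scaling factor so that this raised threshold is met, and finally split $w = m_{\mathcal{A}}\odot w + m_{\tilde{\mathcal{A}}}\odot w$ via the triangle inequality for $\norm{\cdot}^*$ together with $m_{\tilde{\mathcal{A}}}\odot\theta = \mathbf{0}$. The only structural difference is that the paper runs the argument for a general linear SCM (with Jacobian $J_{\mathbb{S}^{\mathcal{I}}}$) and specializes to the IMF case at the end, whereas you work in the IMF setting from the start, which is all the proposition as stated requires. (Incidentally, your derivation of $\inner{m_{\mathcal{A}}\odot w, \theta} > 0$ from the negative classification of $x$ is also more careful than the paper's, which asserts positivity from validity of $a$ alone.)

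The obstacle you flag at the cost step is genuine, but it is a gap in the paper's own proof rather than a defect peculiar to your argument: the paper simply asserts that subadditivity yields $c(x,a') \leq (1+\beta)\,c(x,a)$, which repeated subadditivity only justifies for integer scalings. For fractional scalings a merely subadditive cost can break the stated bound. For instance, with a single actionable feature, $h(x) = x_1 \geq b$, $x_1 = b-1$, $\epsilon = 1/2$, and the subadditive cost $c(x, do(X_1 = x_1+\theta)) = \lceil \lvert\theta\rvert \rceil$: the given recourse action $\theta = 1$ costs $1$, every adversarially robust action requires $\theta \geq 3/2$ and hence costs at least $2$, so the relative extra cost of any robust action is $1$, while the right-hand side of Equation~\ref{eq:beta} equals $1/2$. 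So the proposition needs an additional hypothesis beyond subadditivity, and your fix, positive homogeneity, which holds for the weighted $p$-norm costs the paper has in mind (the $\ell_1$ cost in its experiments), is exactly the right way to close the argument: with it, $c\left(x, do(\mathbf{X}_{\mathcal{I}} = x_{\mathcal{I}} + \beta\theta)\right) = \beta\, c(x,a)$ and the claimed bound follows from your construction.
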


Observe that the upper bound presented above is reduced if the unactionable features are less discriminative (i.e., if $\norm{m_{\tilde{\mathcal{A}}}\odot w}^*$ is small). By additionally regularizing non-linear classifiers to behave locally linearly, we ensure that this upper bounds approximately holds locally. In Appendix \ref{bound}, we extend the above upper bound to the causal setting. 

\subsection{The Actionable Locally Linear Regularizer}

To formalize our proposed regularizer, we draw inspiration from local linearity regularization~\cite{qin2019adversarial}, a popular regularization technique from the adversarial robustness literature. 
For differentiable classifiers $h(x)$, we propose the  Actionable Locally Linear Regularizer (ALLR):
\begin{equation}\label{eq:allr}
\begin{split}
\mathcal{R}(x) &= \mu_1  \max_{\norm{\delta} \leq \epsilon}| h(x+\delta) - \inner{\delta, \nabla_x h(x)} -  h(x)| \\ & + \mu_2 \norm{m_{\tilde{\mathcal{A}}}\odot\nabla_x h(x)}
\end{split}
\end{equation}

The first term in the ALLR regularizer encourages the classifier $h$ to behave linearly near the observed data, while the second term encourages unactionable features to not be very discriminative. The hyperparameters $\mu_1,\mu_2\in\mathbb{R}$ determine the strength of regularization. The classifier is then trained using regularized risk minimization:
\begin{equation}
    \min_{\psi} \mathbb{E}_{(x,y)\sim p(x,y)}\left[\ell(h_{\psi}(x), y) + \mathcal{R}(x)\right]
\end{equation}
where $\ell$ is the binary cross-entropy loss, $p(x,y)$ is the training data distribution, and $\psi$ are the weights of the classifier.

\section{Experiments and results}

First, we empirically validate the effectiveness of the proposed methods in generating adversarially robust recourse. Secondly, we empirically show that regularizing the decision-making classifier with the proposed ALLR regularizer facilitates finding adversarially robust recourse actions. We open source our implementations and experiments\footnote{\href{https://github.com/ricardodominguez/adversariallyrobustrecourse}{github.com/RicardoDominguez/AdversariallyRobustRecourse}}. 

We consider four real-world datasets and one semi-synthetic dataset. For the causal recourse setting, we consider the COMPAS recidivism dataset~\cite{larson2016we} and the Adult demographic dataset~\cite{kohavi1996adult}, for which we adopt the causal graphs assumed in \citet{nabi2018fair}. We fit the structural equations using linear models and neural nets for the linear and the non-linear case, respectively. We additionally consider one semi-synthetic SCM introduced by \citet{karimi2020algorithmic}, which is inspired in a loan approval setting. For the non-causal recourse setting, we consider the South German Credit dataset~\cite{groemping2019south}, as well as a recidivism dataset~\cite{schmidt1988predicting} from North Carolina which we refer to as Bail. In Appendix~\ref{sec:feats} we list the actionability constraints considered.

For all datasets, we treat actionable categorical variables as real-valued, and we standardize all real-valued features. We use as the cost function the $\ell_1$ norm, that is $c(x, a) = \norm{\theta}_1$ for $a(x) = do(X_{\mathcal{I}}=x_{\mathcal{I}}+\theta)$. We consider two types of classifiers: linear models trained with logistic regression (LR), and neural network (NN) models. We define the uncertainty set $B(x)$ with respect to the 2-norm. Since features are standardized, robustifying against $\epsilon$ uncertainty is equivalent to guarding against perturbations at least $\epsilon$ times the standard deviation of any given feature. For instance, in the Adult dataset the standard deviation of the feature \emph{age} is 13.6 years. Thus, for $\epsilon=0.1$ uncertainty robust actions should remain valid even if the age of the individual seeking recourse changes by $\pm~1.36$ years. In Appendix~\ref{sec:stds} we list the standard deviation of the considered features.

 \begin{figure}
      \begin{center}
    \scalebox{0.85}{\begin{adjustbox}{clip,trim=0.25cm 0.30cm 0.30cm 0.25cm}
            \includegraphics{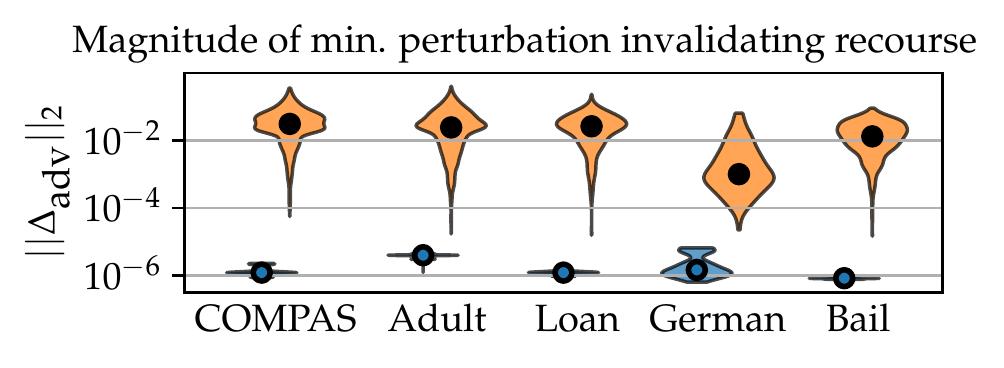}
    \end{adjustbox}}
    \end{center}
    \caption{Fragility of standard recourse. Very small feature perturbations can invalidate recourse, particularly for linear classifiers. Legend: \legendsquare{def1} LR classifier~~\legendsquare{def2} NN classifier.} 
        \label{fig:pertb}
\end{figure}

\subsection{Minimum-cost recourse is fragile}

\begin{figure*}
      \begin{center}
    \scalebox{0.85}{\begin{adjustbox}{clip,trim=0.3cm 0.35cm 0.4cm 0.0cm}
            \includegraphics{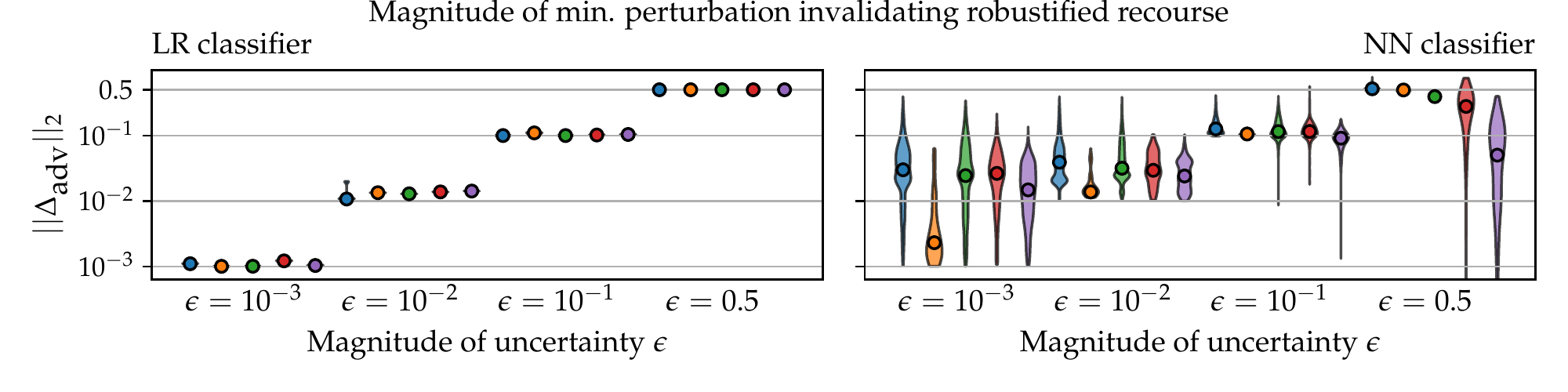}
    \end{adjustbox}}
    \end{center}
    \caption{Robustness of recourse actions that have been robustified against $\epsilon$ uncertainty. For LR classifiers, the proposed methods are very effective in generating robust recourse actions. For NN classifiers, some of the generated recourse actions may be fragile for reasonably large uncertainty $\epsilon\in\{0.1, 0.5\}$. Legend:
    \legendsquare{def1} COMPAS~~\legendsquare{def2}~Adult~~\legendsquare{def3}~Loan~~\legendsquare{def4}~German~~\legendsquare{def5}~Bail.}
        \label{fig:robpertb}
\end{figure*}

First, we empirically demonstrate that recourse methods which aim to generate minimum-cost recourse fail to be robust. To do so, we train the decision-making classifiers using expected risk minimization and generate recourse for the negatively classified individuals. We use the minimum-cost recourse methods of \citet{wachter2017counterfactual} and \citet{karimi2021algorithmic} for the causal and non-causal recourse setting, respectively. We then use the C\&W adversarial attack~\cite{carlini2017towards} to find the smallest additive intervention $\Delta_{\text{adv}}$ which invalidates the generated recourse action $a$, that is, such that $h(\mathbb{CF}(x_{\text{adv}}, a))=0$ for $x_{\text{adv}} = \mathbb{CF}(x, \Delta_{\text{adv}})$.

We present the experiment results in Figure~\ref{fig:pertb}. We observe that the recourse actions generated for both LR and NN classifiers are remarkably fragile, and can be invalidated under uncertainty ranging from $\epsilon = 10^{-2}$ to $10^{-6}$. For linear classifiers, recourse actions are particularly brittle, since the recourse problem is convex and thus its global minimum (which is provably fragile) can be accurately computed.

\subsection{Generating adversarially robust recourse}

We evaluate the effectiveness of the methods proposed in Section~5 in generating adversarially robust recourse against different uncertainty $\epsilon\in\{10^{-3}, 10^{-2}, 10^{-1}, 0.5\}$. We consider $\epsilon\in\{10^{-3}, 10^{-2}\}$ to be reasonably small uncertainty, $\epsilon=0.1$ significant uncertainty, and $\epsilon=0.5$ very large uncertainty. For each individual $x$ and level of uncertainty $\epsilon$, we first generate a recourse action $a$ which is robustified against $\epsilon$. We then use the C\&W attack to search for the smallest intervention $\Delta_{\text{adv}}$ to the features of the individual $x$ which invalidates the generated recourse action $a$. If the magnitude of such adversarial intervention is lower than the uncertainty $\epsilon$ against which the $a$ was robustified (i.e., $\norm{\Delta_{\text{adv}}} \leq \epsilon$), we can assert that the action $a$ is fragile.

We present the experimental results in Figure~\ref{fig:robpertb}. For linear models, all adversarial interventions $\Delta_{\text{adv}}$ found have magnitude larger than $\epsilon$. Thus, in the linear case our proposed method is effective in generating robust recourse. Furthermore, since all perturbations found are larger than~$\epsilon$ by only an arbitrarily small amount, the recourse actions generated are not only robust but also minimally costly (i.e., there is no over-robustification). For NNs models, our proposed method is effective in generating robust recourse actions against reasonably small uncertainty~$\epsilon$. For reasonably large uncertainty~$\epsilon$, however, some of the generated recourse actions may be fragile, since the non-covex inner maximization in Algorithm~1 is more likely to arrive to a local maximum rather than to its global maximum. Nonetheless, our proposed method generates substantially less brittle recourse actions compared to the standard minimum-cost recourse generation methods previously evaluated (Section~7.1).

\subsection{Actionable local linearity regularization}

We empirically evaluate whether training the decision-making classifier with the proposed ALLR regularizer facilitates the existence of adversarially robust recourse. We compare against the following approaches:
\begin{itemize}[topsep=0pt]
    \setlength\itemsep{0em}
    \item Empirical risk minimization (ERM): the standard choice for model training in the recourse literature. Equivalent to ALLR with zero regularization strength. 
    \item ERM using only actionable features (AF): facilitates the existence of recourse per Proposition~\ref{existence1}. Equivalent to ALLR with infinitely large regularization $\mu_2 \rightarrow \infty$.
    \item The approach of \citet{ross2021learning}, which makes actionable features more discriminative using the regularizer $\mathcal{R}(x) = \min_{\delta}\ell(h(x + m_{\mathcal{A}} \odot d), \mathbf{1})$.
\end{itemize}

First, we train five classifiers with different train and test splits for each of the considered approaches. For the negatively classified individuals, we generate standard recourse (i.e., $\epsilon=0$) and robust recourse against a significant amount of uncertainty $\epsilon=0.1$. We then evaluate the percentage of individuals for which robust recourse is found, as well as the mean cost of recourse. We additionally evaluate the predictive performance of the classifiers by computing their accuracy and Matthews correlation coefficient~(MCC).

We present the experimental results in Figure~\ref{fig:exp1}. For both LR and NN models, we find that our proposed regularizer is generally very effective in facilitating the existence of adversarially robust recourse, increasing the percentage of individuals for which robust recourse is found by up to~100\%. Furthermore, classifiers trained with ALLR generally offer recourse at a similar or lower cost compared to classifiers trained with ERM. The accuracy of the classifiers trained with ALLR may decrease by up to 3\% compared to classifiers trained with ERM, but does not decrease at all for three out of the five datasets considered. Using only actionable features (AF) for classification also significantly facilitates the existence of robust recourse, but often leads to poor predictive performance. Finally, while the regularization approach of \citet{ross2021learning} is very effective in providing low-cost recourse, ALLR is generally more effective in facilitating the existence of robust recourse.

 \begin{figure*}
      \begin{center}
    \scalebox{0.85}{\begin{adjustbox}{clip,trim=0.0cm 0.0cm 0.0cm 0.0cm}
            \includegraphics{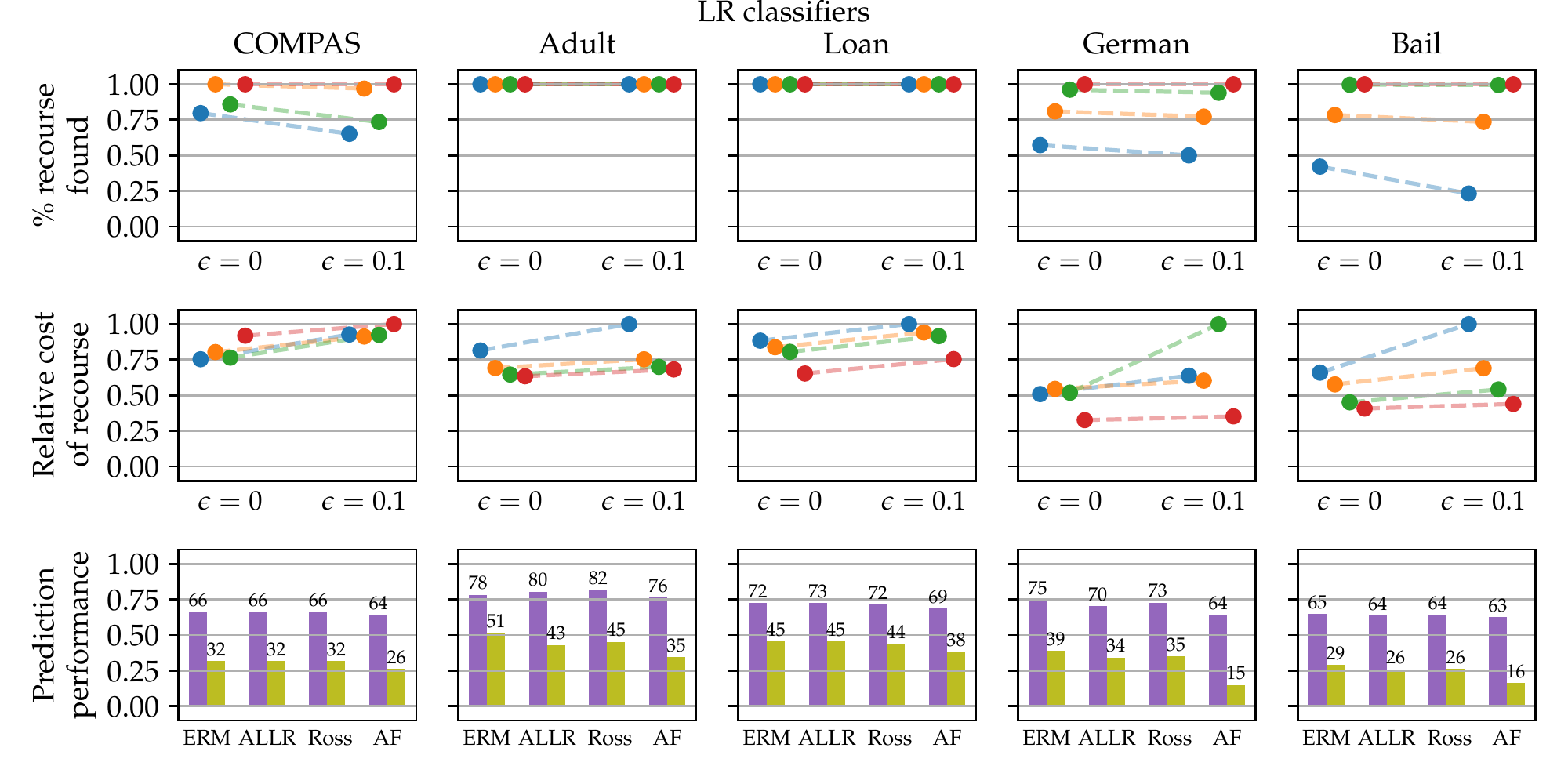}
    \end{adjustbox}}
    \phantom{space here}
    \scalebox{0.85}{\begin{adjustbox}{clip,trim=0.0cm 0.0cm 0.0cm 0.0cm}
            \includegraphics{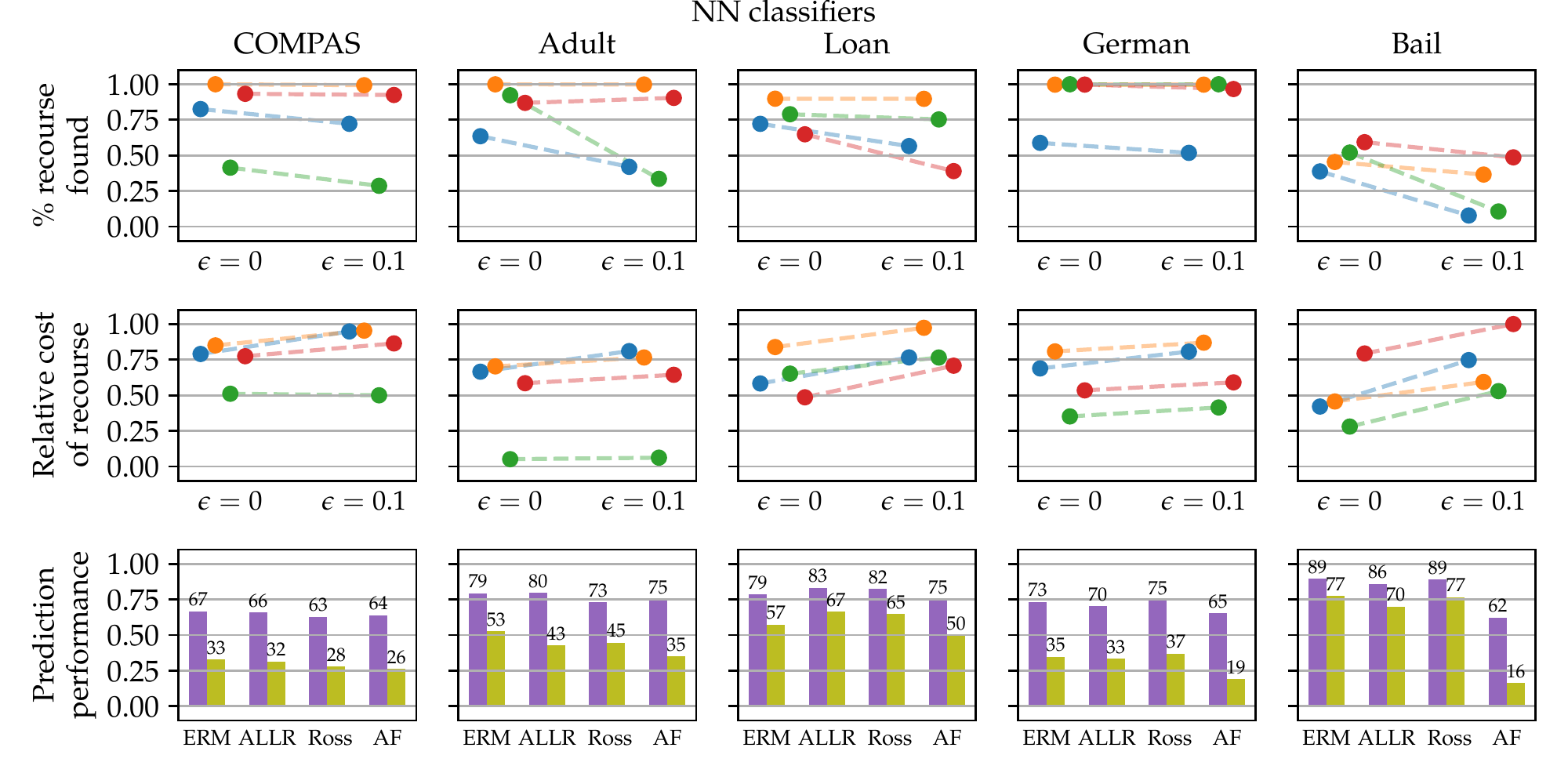}
    \end{adjustbox}}
    \end{center}
    \caption{For LR models, classifiers trained with ALLR provide robust recourse to a substantially larger percentage of individuals compared to classifiers trained using ERM. For NN models, ALLR is the most effective method in facilitating the existence of adversarially robust recourse. Legend: \legendsquare{def1} ERM~~\legendsquare{def2}~ALLR~~\legendsquare{def3}~Ross et al.~~\legendsquare{def4}~AF~~\legendsquare{def5}~Accuracy~~\legendsquare{def6}~MCC.}
        \label{fig:exp1}
\end{figure*}

\section{Conclusion}

Uncertainty in the recourse process is inevitable. Previously suggested \emph{ex-post} solutions to mitigate the effect of uncertainty in the recourse process may result in negative outcomes for both the decision-maker and the individual. We instead adopted an \emph{ex-anti} approach to the robustness of recourse by requiring recourse recommendations to be robust to uncertainty in the features of the individual seeking recourse. Alarmingly, we showed that minimum cost recourse is provably fragile to arbitrarily small uncertainty in the individual seeking recourse. To address this critical issue, we formulated the adversarially robust recourse problem and presented methods to generate adversarially robust recourse for linear and for differentiable classifiers. Finally, we derived sufficient conditions for the existence of adversarially robust recourse, and  we empirically demonstrated that regularizing the decision-making classifier to behave locally linearly and to rely more strongly on actionable features facilitates the existence of robust recourse.

\vfill

\paragraph{Acknowledgements} 
The authors thank Shalmali Joshi, Himabindu Lakkaraju, Martin Pawelczyk and Berk Ustun for helpful feedback and discussions. AHK acknowledges generous founding support from NSERC, CLS, and Google.

\newpage
\bibliography{biblio}
\bibliographystyle{icml2022}

\appendix
\onecolumn

\section{Uncertainties in the recourse process}\label{sec:uncertainties}

\tikzstyle{block} = [draw, rectangle, 
    minimum height=2em, minimum width=8em]
\tikzstyle{circ} = [draw, ellipse, dashed, radius=1em]
\tikzstyle{circ2} = [draw, circle, dashed, radius=1em]
\tikzstyle{circ3} = [draw, ellipse, radius=1em]
\tikzstyle{sum} = [draw, circle, node distance=1cm]
\tikzstyle{input} = [coordinate]
\tikzstyle{output} = [coordinate]
\tikzstyle{pinstyle} = [pin edge={to-,thin,black}]

\begin{figure*}[h]
\centering
\begin{tikzpicture}[auto, node distance=3.5cm,>=latex']
    \node [circ3, name=input, inner sep=2.2pt] {\small $\mathcal{M}$};
    
    \node [coordinate, below left=0.25cm and 0.5cm of input] (l1) {};
    \node [coordinate, right=13.5cm of l1] (r1) {};
    
    \node [block, right of=input, align=center, minimum width=7em, node distance=3.5cm] (train) {\small Train classifier};
    \node [circ3, right of=train, node distance=2.2cm, inner sep=2pt] (h) {\small $h$};
    
    \node [coordinate, below of=input, node distance=1cm] (m2) {};
    \node [coordinate, below left=2.4cm and 0.5cm of input] (l2) {};
    \node [coordinate, right=13.5cm of l2] (r2) {};
    
    \node [circ3, below of=train, node distance=1cm] (x) {\small $x$};
    
    \node [coordinate, right of=h, node distance=0.7cm] (hh) {};
    \node [coordinate, right of=m2, node distance=0.7cm] (mm) {};
    \node [coordinate, right of=x, node distance=0.7cm] (xx) {};
    
    \node [block, right of=x, minimum width=5em, node distance=4.2cm] (infer) {\small Inference};
    
    \node [block, below right=0.2cm and -1cm of infer, node distance=4cm, align=center, minimum width=6em] (rec) {\small Recourse \\ \small generation};
    
    \node [circ3, above left=-0.8cm and 1cm of rec, inner sep=1.5pt] (F) {\small $\mathcal{F}$};
    \node [circ3, above left=-0.95cm and 0.4cm of rec, inner sep=2pt] (c) {\small $c$};
    
    \node [coordinate, above right=0.0cm and 4.3cm of infer] (a1) {};
    \node [coordinate, below right=1cm and 4.3cm of infer] (a2) {};
    
    \node [block, below right = 0.4cm and -1.1cm of rec, minimum height=1.0cm, minimum width=1.5cm, align=center] (val) {\small Recourse \\ \small validation};
    
    \node [coordinate, right of=val, node distance=3.6cm] (val2) {};
    
    \node [circ, below of=m2, node distance=1.9cm, inner sep=1pt] (m3) {\small $\hat{\mathcal{M}}$};
    \node [circ2, below of=x, node distance=2.3cm, inner sep=2.5pt] (x2) {\small $\hat{x}$};
    \node [circ2, below of=h, node distance=3.7cm, inner sep=1pt] (h2) {\small $\hat{h}$};

    \draw [draw,->] (input) -- node {\small $p_{\text{train}}(x,y)$} (train);
    \draw [draw,->] (train) -- node {} (h);
    \draw [draw,-] (h) -- node {} (hh);

    \draw [draw,->] (hh) |- node {} ([yshift=5pt]infer);
    \draw [draw,->] (hh) |- node {} ([yshift=20pt]rec);
    \draw [draw,->] (xx) |- node {} ([yshift=24pt]rec);
    \draw [draw,->] (mm) |- node {} ([yshift=12pt]rec);
    
    \draw [draw,->] (a1) -- node [left] {\small Time} (a2);
    
    \draw [draw,->] (x) -- node {} (infer);
    \draw [draw,->] (infer) -| node {\small $\hat{y}=0$?} ([xshift=20pt]rec);
    \draw [draw,->] (rec) -| node {\small $a$} ([xshift=40pt]val);
    \draw [draw,->] (F) -- node {} ([yshift=-5pt]rec.west);
    \draw [draw,->] (c) -- node {} ([yshift=-10pt]rec.west);
    
    \draw [draw,->] (m2) -- node {$p_{\text{inference}}(x)$} (x);
    
    \draw [draw,-] (input) -- node {} (m2);
    \draw [dashed] (m2) -- node {} (m3);
    \draw [dashed] (x) -- node {} (x2);
    \draw [dashed] (h) -- node {} (h2);
    \draw [dashed] (m3) |- node {} (x2);
    \draw [dashed] (m3) |- node {} (h2);
    
    \draw [draw,->] (m3) -- node {} ([yshift=12pt] val.west);
    \draw [draw,->] (x2) -- node {} ([yshift=2pt] val.west);
    \draw [draw,->] (h2) -- node {} ([yshift=-9pt] val.west);
    
    \draw [draw,->] (val) --  (val2) node [above,xshift=25pt,pos=0.25] {\small $\hat{h}(\mathbb{CF}(\hat{x}, a, \hat{\mathcal{M}}))=1$?} node [below,xshift=10pt,pos=0.25] {};
   
    \draw [dashed, very thick] (l1) |- node {} (r1);
    \draw [dashed, very thick] (l2) |- node {} (r2);
    
\end{tikzpicture}
\caption{Overview of the recourse process. Uncertain elements are represented with dashed circles. Possible relations between uncertain elements are represented with non-bold dashed lines. Bold dashed lines represent temporal jumps.}
\label{fig:process}
\end{figure*}
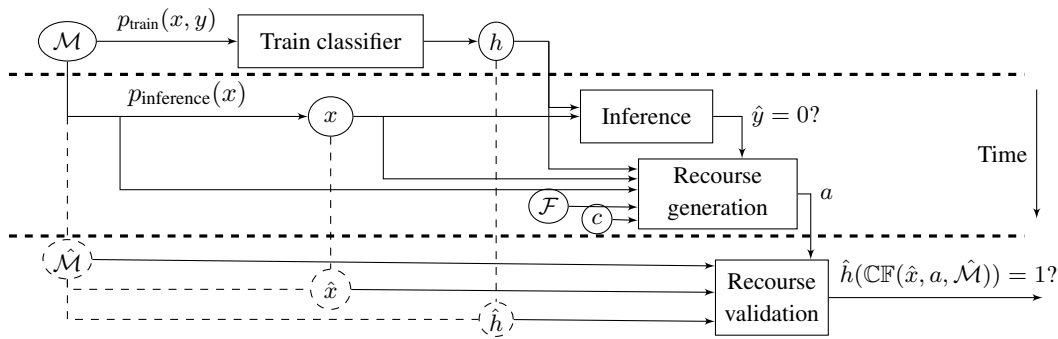

Uncertainties may arise throughout the recourse process, as depicted in Figure~\ref{fig:process}. Some well-studied sources of uncertainty in the classification setting naturally extend to algorithmic recourse. A great deal of the robust classification literature has focused on uncertainty in the inputs $x$ at inference time, which may arise due to the presence of noise~\cite{fawzi2016robustness,xu2009robustness}, adversarial manipulation~\cite{madry2017towards, szegedy2014intriguing} and other misrepresentations or errors in the data~\cite{zheng2016improving}. Regarding the classifier $h$, the optimization problem solved during model training often does not have a unique optimal solution and multiple models may perform equally well~\cite{breiman2001statistical, rudin2019stop}. Moreover, the temporal nature of recourse introduces a unique challenge: the circumstances under which recourse is generated may change by the time the individual is able to implement their prescribed recourse action. For instance, the distribution over inputs itself may change at inference time, under phenomena such as data-set shift~\cite{moreno2012unifying, quinonero2009dataset} or for tasks requiring out of distribution generalization~\cite{geirhos2020shortcut, muandet2013domain}. 

From a causal perspective, changes in the observational data distribution are a consequence of changes to the underlying SCM~\cite{buhlmann2020invariance}. Indeed, the data-generation process characterized by the SCM $\mathcal{M}$ may be imperfectly known~\cite{von2020fairness} or may dynamically change over time to some other SCM $\hat{\mathcal{M}}$. Consequently, the counterfactual individual resulting from the prescribed recourse intervention may also change. Furthermore, decision-makers may have to periodically retrain their models to prevent performance degradation due to the distribution shift resulting from a change in the SCM, producing further uncertainty over the future classifier $\hat{h}$~\cite{rawal2020algorithmic, upadhyay2021towards}. Finally, it may be unreasonable to expect the individual $x$ to not suffer changes outside its control over an extended period of time~\cite{venkatasubramanian2020philosophical}, leading to uncertainty in the future individual $\hat{x}$. Thus, acting on the prescribed recourse may not lead to favorable classification due to changes to the SCM $\hat{\mathcal{M}}$, classifier $\hat{h}$, and/or factual individual $\hat{x}$.

\section{Counterfactual neighborhoods adapt to the local geometry of the data manifold}\label{sec:neig}

\begin{figure}[h]
\centering
      \captionsetup{width=.95\linewidth}
     \scalebox{0.85}{\begin{adjustbox}{clip,trim=1cm 0.6cm 0.2cm 1.cm}
            \includegraphics{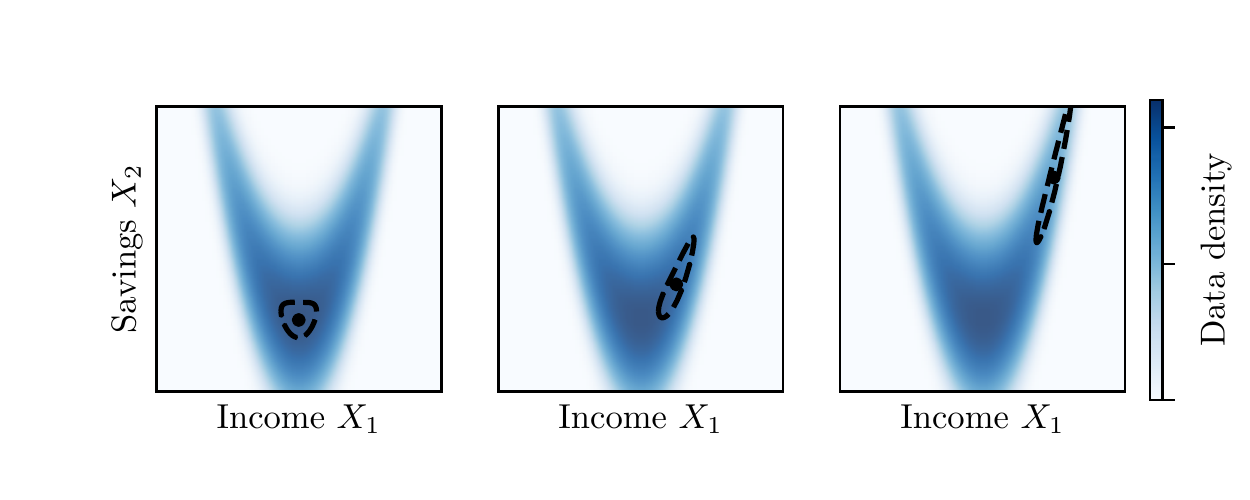}
    \end{adjustbox}}
 \caption{Counterfactual neighborhoods of similar individuals for the SCM $X_1=U_1$, $X_2 = X_1^2 + U_2$.}
 \label{fig:simnon}
\end{figure}%

The SCM $\mathcal{M}$ amounts to a generative model of the data and thus (approximately) captures the underlying data manifold. Therefore, the individuals in the uncertainty set $B(x)$ are realistic in the sense that they lie within the data manifold (i.e., have sufficient data support). For non-linear SCMs, the shape of $B(x)$ adapts to the local geometry of the manifold, as illustrated in Figure~\ref{fig:simnon}. Further studying this behavior is an interesting research direction for future work.
 
\begin{figure}[h]
\centering
\begin{minipage}{.47\textwidth}
  \centering
  \scalebox{.6}{\begin{adjustbox}{clip,trim=0.1cm 0.3cm 0.2cm 0.2cm}
            \includegraphics{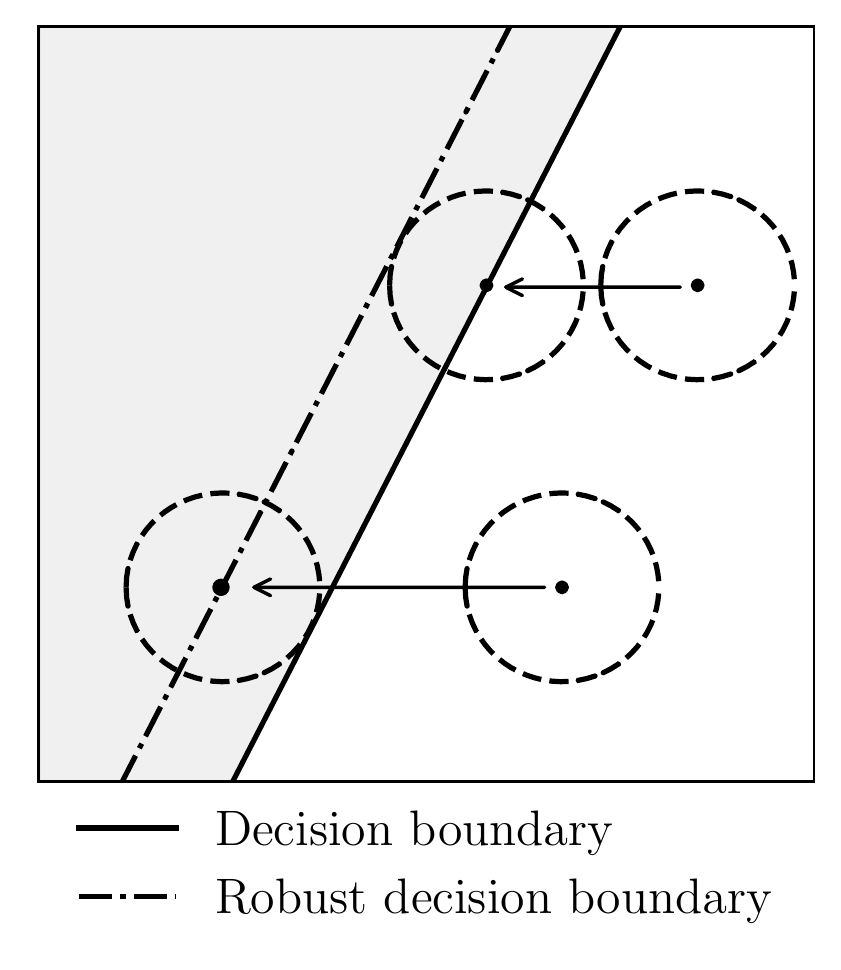}
    \end{adjustbox}}
  \captionof{figure}{Standard recourse with respect to the robust decision boundary results in adversarially robust recourse.}
  \label{fig:lin-robust-bdry}
\end{minipage}%
\hspace{.04\textwidth}
\begin{minipage}{.47\textwidth}
  \centering
  \scalebox{.6}{\begin{adjustbox}{clip,trim=0.1cm 0.3cm 0.2cm 0.2cm}
            \includegraphics{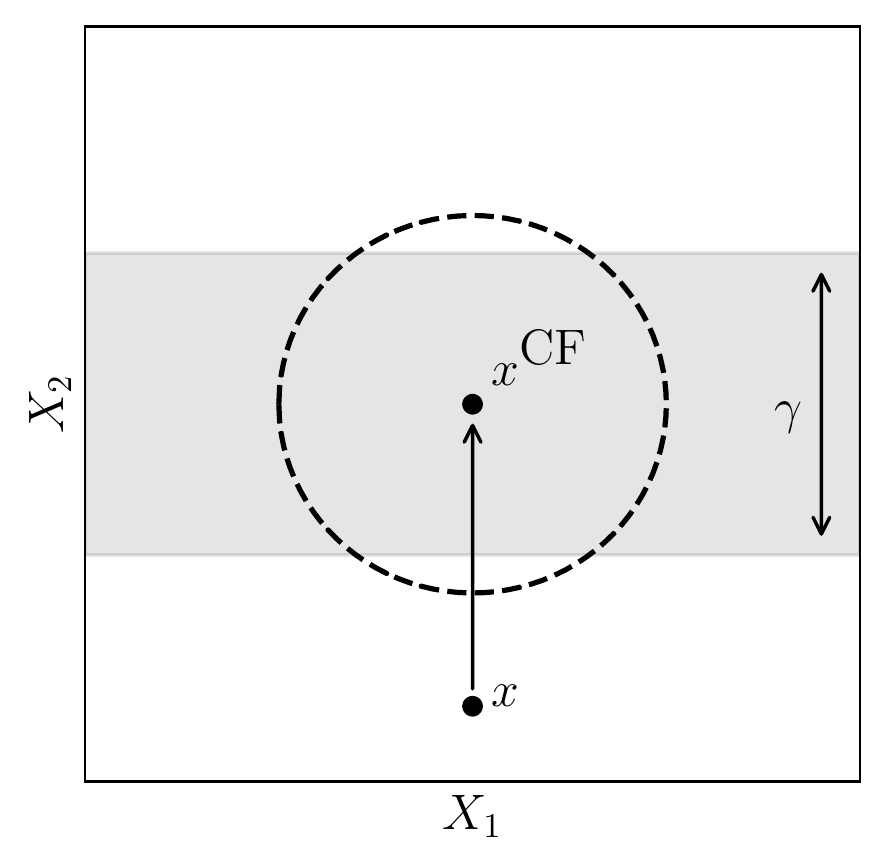}
    \end{adjustbox}}
  \captionsetup{width=.95\linewidth}
  \captionof{figure}{Classifier of Example~\ref{ex1}. The shaded area is the favourably classified region of the feature space}
  \label{fig:example1} 
\end{minipage}
\end{figure}

\section{Proofs}

\subsection{Theorem~\ref{th:fragile}}

Let $a^*(x)=do(\mathbf{X}_{\mathcal{I}}=x_{\mathcal{I}}+\theta^*)$ be a minimum-cost recourse action for some classifier $h$ and individual $x\in\mathcal{X}$. Assume that $a^*$ is a robust recourse action for the uncertainty set $B(x) = \{\mathbb{CF}(x, \Delta) \;|\; \norm{\Delta} \leq \epsilon\}$. Let us choose some intervened-upon feature $\mathbf{X}_{\mathcal{I}_j}$ that is not a causal ancestor of any other intervened-upon feature $\mathbf{X}_{\mathcal{I}_i} \; \forall i \neq k$. For a DAG causal graph, at least one such feature $\mathbf{X}_{\mathcal{I}_j}$ must clearly exist. We will consider perturbations along precisely this feature $\mathbf{X}_{\mathcal{I}_j}$.

Let $e^j\in\mathbb{R}^{|\mathcal{I}|}$ be a standard basis vector such that $e_j^j=1$ and $e_i^j = 0 \; \forall i \neq j$. Consider the perturbation $\delta=\alpha e^j\sign(\theta_j)$ for any $\alpha \leq \epsilon$. The modified action $a'(x)=do(X_{\mathcal{I}}=x_{\mathcal{I}}+\theta^*-\delta)$ is a valid recourse action, since
\begin{equation}
    \begin{split}
        h(\mathbb{CF}\left(x, a'\right)) &= h(\mathbb{CF}\left(x, do(X_{\mathcal{I}}=x_{\mathcal{I}}+\theta^*-\delta)\right)\\
        &= h\left(\mathbb{CF}\left(\mathbb{CF}(x, -\delta), do(X_{\mathcal{I}}=x_{\mathcal{I}}+\theta^*)\right)\right) \\
        &= h\left(\mathbb{CF}\left(\mathbb{CF}(x, -\delta), a^*\right)\right)\\
        &= 1
    \end{split}
\end{equation}
where the second equality holds given that the SCM $\mathcal{M}$ is an additive noise model and $\mathbf{X}_j$ is not a causal ancestor of the intervened-upon features $\mathbf{X}_{\mathcal{I}}$. The last equality holds per the assumption that $a^*$ is an adversarially robust recourse action, and since $\norm{\delta} \leq \epsilon$. If the cost function is subadditive, then it must be that 
\begin{equation}
    \begin{split}
        c(x, a') &= c(x, \theta^* + \theta - \delta) \\
        &< c(x, \theta^* + \theta) \\
        & = c(x, a^*)
    \end{split}
\end{equation}
Thus, $a'$ is a valid recourse action which has strictly lower cost than $a^*$. This is a contradiction on $a^*$ being a minimum-cost recourse action, which stems from the assumption that $a^*$ is adversarially robust. Consequently, the minimum-recourse action $a^*$ is provably fragile to arbitrarily small (i.e., $\epsilon > 0$) perturbations to the features of the individual $x$ seeking recourse. 

\subsection{Proposition~\ref{noexistence}}

\begin{example}
Consider $\mathcal{X} = \mathbb{R}^2$ and $h(x) = \sin(2\gamma\pi^{-1} x_2) \; \geq \; 0$ for $0 < \gamma < \epsilon $. Assume that all features are independently modifiable, and consider the uncertainty set $B(x) = \{x + \Delta \;| \; \norm{\Delta}_2 \leq \epsilon\}$. Whilst there exists some recourse recommendation for all $x \in \mathcal{X}$, there does not exist any adversarially robust recourse action for any $x \in \mathcal{X}$. See Figure~\ref{fig:example1}.
\label{ex1}
\end{example}

\subsection{Proposition~\ref{existence1}}

Let $h$ be a classifier for which there exists some robustly classified individual $x^+\in\mathcal{X}$ such that $h(x^+)=1 \; \forall x' \in B(x^+)$, where $B(x^+)=\{\mathbb{CF}(x^+, \Delta)|\norm{\Delta}\leq \epsilon\}$. For any given individual $x$, the action $a(x)=do\left(\mathbf{X}=x+(x^+-x)\right)$ results in the counterfactual individual $\mathbb{CF}(x, a)=x^+$. The action $a$ is a recourse action, since $h(x^{\textrm{CF}})=h(x^+)=1$. Moreover, the action $a$ is feasible, since all features are actionable by assumption. For any $x'\in B(x)$ it holds that
\begin{equation}
    \{\mathbb{CF}(x', a) \; | \; x'\in B(x) \} = \{x' + x^+ - x \; | \; x'\in B(x)\} = B(x^+)
\end{equation}
where the last equality holds since the SCM $\mathcal{M}$ is linear. Consequently, it holds that
\begin{equation}
        h\left(\mathbb{CF}\left(x', a\right)\right)=1 \; \forall x'\in B(x) \iff h(x')=1 \; \forall x'\in B(x^+)
\end{equation}
since the equality in the right hand side holds by assumption that $x^+$ is robustly classified, it follows that the action $a$ is an adversarially robust recourse action.

\subsection{Proposition~\ref{existence2}}\label{sec:proofex2}

By assumption the classifier $h(x)=\inner{w,x} \geq b$ and SCM $\mathcal{M}$ are linear, and thus per Proposition~\ref{linear} a recourse action $a$ is adversarially robust if it holds that $\inner{w, \mathbb{CF}(x, a)} \geq b'$ for some $b'>b$. By assumption, there exists some feature $\mathbf{X}_j$ such that $\mathbf{X}_j$ is actionable and unbounded. Consider the recourse action $a(x)=do(\mathbf{X}_j=x_j+\theta)$ for $\theta\in\mathbb{R}$. Due to the linearity assumptions on the SCM, $\mathbb{CF}(x, a) = x + \theta v$ for some $v\in\mathbb{R}^n$. Then, $\inner{w, \mathbb{CF}(x, a)} = \inner{w, x + \theta v} = \inner{w,x}+\theta\inner{w,v}$. A robust recourse action is equivalent to any $\theta$ such that $ \theta \inner{w,v}\geq b'-\inner{w,x}$. If $\inner{w,v}\neq0$, then clearly it is possible to set $\theta$ to have arbitrarily large magnitude and same sign as $\inner{w,v}$, such that the inequality above is met. Since $\mathbf{X}_j$ is actionable and unbounded, $a(x)=do(\mathbf{X}_j=x_j+\theta)$ is a feasible action. Consequently, $a$ is a robust recourse action.

We argue that the requirement $\inner{w,v}\neq0$ is a mild condition on the weights of the classifier, precisely the non-trivial case where the weights of the classifier are not chosen adversarially to the SCM. Such condition is the causal counterpart of the trivial requirement that $w_j \neq 0$ in the statements presented by \citet{ustun2019actionable} for the non-causal recourse setting.

\subsection{Proposition~\ref{linear}}

The adversarially robust recourse problem is defined as
\begin{equation}
\min_{a(x)=do(X_{\mathcal{I}}=x_{\mathcal{I}}+\theta)} \; c(x, a) \quad \text{s.t.} \quad a\in\mathcal{F}(x)\; \wedge\; h\left(\mathbb{CF}\left(x', a\right)\right)=1 \quad \forall \; x' \in B(x)
\end{equation}

For a linear classifier $h(x) = \inner{w, x} \geq b$ the uncertain constraint is equivalent to
\begin{equation}
    h\left(\mathbb{CF}\left(x', a\right)\right)=1 \quad \forall \; x' \in B(x) \iff \left(\min_{x'\in B(x)} \inner{w,\mathbb{CF}\left(x', a\right)}\right) \geq b
     \label{eq:p13}
\end{equation}

Under the assumption that the SCM $\mathcal{M}$ is linear, for any plausible individual $x'=\mathbb{CF}(x, \Delta)$ it holds that

\begin{equation}
\begin{split}
    \mathbb{CF}(x', a) &= \mathbb{S}^{a}\left(\mathbb{S}^{-1}\left(x'\right)\right)\\
    &=\mathbb{S}^{a}\left(\mathbb{S}^{-1}\left(\mathbb{CF}(x, \Delta)\right)\right)\\
    &=\mathbb{S}^{a}\left(\mathbb{S}^{-1}\left(\mathbb{S}^{\Delta}\left(\mathbb{S}^{-1}(x)\right)\right)\right) \\ &= \mathbb{S}^{a}\left(\mathbb{S}^{-1}\left(\mathbb{S}\left(\mathbb{S}^{-1}(x)+\Delta\right)\right)\right) \\ &= \mathbb{S}^{a}\left(\mathbb{S}^{-1}(x)+\Delta\right)\\
    &= \mathbb{S}^{a}\left(\mathbb{S}^{-1}(x)\right) +\mathbb{S}^{a}\left(\Delta\right)\\
    &= \mathbb{CF}(x, a) +J_{\mathbb{S}^{\mathcal{I}}}\Delta\\
    \label{eq:p14}
\end{split}
\end{equation}
where $J_{\mathbb{S}^{\mathcal{I}}}$ denotes the Jacobian of the interventional mapping $\mathbb{S}^{\mathcal{I}}$ resulting from hard-intervening on the variables $\mathbf{X}_\mathcal{I}$.

Then  for the uncertainty set $B(x) = \{\mathbb{CF}(x,a) \; | \; \norm{\Delta} \leq \epsilon\}$ the uncertain constraint in Equation~\ref{eq:p13} is equivalent to
\begin{equation}
    \begin{split}
        \min_{x'\in B(x)}  \inner{w,\mathbb{CF}\left(x', a\right)} &= \min_{\norm{\Delta}\leq \epsilon}  \inner{w,\mathbb{CF}\left(x, a\right) + J_{\mathbb{S}^\mathcal{I}}\Delta}\\
        &= \inner{w,\mathbb{CF}\left(x, a\right)} + \min_{\norm{\Delta}\leq \epsilon} \inner{w,J_{\mathbb{S}^\mathcal{I}}\Delta}\\
        &= \inner{w,\mathbb{CF}\left(x, a\right)} - \norm{J^T_{\mathbb{S}^\mathcal{I}}w}^*\epsilon
    \end{split}
\end{equation}

Consequently, the adversarially robust recourse problem reduces to
\begin{equation}
    \min_{a(x)=do(X_{\mathcal{I}}=x_{\mathcal{I}}+\theta)} \; c(x, a) \quad \text{s.t.} \quad a\in\mathcal{F}(x)\; \wedge\;   \inner{w,\mathbb{CF}\left(x, a\right)}\geq b + \norm{J^T_{\mathbb{S}^\mathcal{I}}w}^*\epsilon
    \label{eq:p103}
\end{equation}
This is equivalent to the standard recourse problem for the classifier $h'(x) = \inner{w, x} \geq b + \norm{J^T_{\mathbb{S}^\mathcal{I}}w}^*\epsilon$.  See Figure~\ref{fig:lin-robust-bdry}. 

\subsection{Proposition~\ref{differentiable}}

For a classifier $h(x) = \tilde h(x) \geq b$ where $\tilde h: \mathcal{X} \rightarrow \left(0, 1\right]$ the uncertain constraint is equivalent to
\begin{equation}
\begin{split}
    h\left(\mathbb{CF}\left(x', a\right)\right)=1 \quad \forall \; x' \in B(x) &\iff \left(\min_{x'\in B(x)} \tilde h\left(\mathbb{CF}\left(x', a\right)\right)\right) \geq b \\
    &\iff \left(\min_{x'\in B(x)} \log \tilde h\left(\mathbb{CF}\left(x', a\right)\right)\right) \geq \log b \\
    &\iff \left(-\max_{x'\in B(x)} -\log \tilde h\left(\mathbb{CF}\left(x', a\right)\right)\right) \geq \log b \\
    &\iff \log b + \max_{x'\in B(x)} \ell \left( \tilde h\left(\mathbb{CF}\left(x', a\right)\right), \mathbf{1}\right) \leq 0 \\
\end{split}
\end{equation}
The corresponding Lagrangian is then
\begin{equation}
    L(a, \lambda) = c(x, a) + \lambda \left(\log b + \max_{x'\in B(x)} \ell \left( \tilde h\left(\mathbb{CF}\left(x', a\right)\right), \mathbf{1}\right)\right)
\end{equation}

Therefore, adversarially robust recourse problem is equivalent~\cite{boyd2004convex} to
\begin{equation}
    \min_{a(x)=do(X_{\mathcal{I}}=x_{\mathcal{I}}+\theta)\in\mathcal{F}(x)}\max_{\lambda \geq 0} \; L(a, \lambda) 
\end{equation}

\subsection{Proposition~\ref{th:cost} and extension to the causal setting}\label{bound}

Let the classifier $h(x) = \inner{w, x} \geq b$ and SCM $\mathcal{M}$ be linear. By assumption, the action $a(x)=do\left(\mathbf{X}_{\mathcal{I}} = x_{\mathcal{I}}+\theta\right)$ is a recourse action for $h$, and the features $\mathbf{X}_\mathcal{I}$ are actionable and unbounded. Consider a modified action of the form $a'(x)=do\left(\mathbf{X}_{\mathcal{I}} = x_{\mathcal{I}}+(1+\beta)\theta\right)$ where the individual $x$ is asked to intervene on the features $x_{\mathcal{I}}$ by an additional factor~$\beta$. Per the assumption that $x_{\mathcal{I}}$ are unbounded, the modified action $a'$ is actionable. Per Proposition~\ref{linear}, a sufficient condition for $a'$ to be an adversarially robust recourse action against the uncertainty set $B(x) = \{\mathbb{CF}(x, \Delta) \; | \; \norm{\Delta} \leq \epsilon\}$ is
\begin{equation}
    \inner{w,\mathbb{CF}\left(x, a'\right)}\geq b + \norm{J^T_{\mathbb{S}^\mathcal{I}}w}^*\epsilon
    \label{eq:p20}
\end{equation}
 Since the SCM $\mathcal{M}$ is assumed linear, it holds that $\mathbb{CF}(x, a')=x + J_{\mathbb{S}^I}(1+\beta\epsilon)\theta$ (first order Taylor expansion). Then,
\begin{equation}
    \begin{split}
         \inner{w,\mathbb{CF}\left(x, a'\right)} &= \inner{w, x + (1+\beta\epsilon)J_{\mathbb{S}^{\mathcal{I}}}\theta)}\\
         &=\inner{w, x + J_{\mathbb{S}^I}\theta}+\beta\epsilon\inner{w, J_{\mathbb{S}^{\mathcal{I}}}\theta}\\
         &=\inner{w, \mathbb{CF}(x, a)}+\beta\epsilon\inner{w, J_{\mathbb{S}^{\mathcal{I}}}\theta}\\
         &\geq b + \beta\epsilon\inner{w, J_{\mathbb{S}^{\mathcal{I}}}\theta}
    \end{split}
    \label{eq:p21}
\end{equation}
where the last inequality follows by assumption that $a$ is a recourse action for $h$. Consequently, we seek a condition on $\beta$ such that Equation~\ref{eq:p21} satisfies the inequality in Equation~\ref{eq:p20}. In particular, it holds that
\begin{equation}
    \beta = \dfrac{ \norm{J^T_{\mathbb{S}^\mathcal{I}}w}^*}{\inner{w, J_{\mathbb{S}^{\mathcal{I}}}\theta}}\epsilon \implies \inner{w,\mathbb{CF}\left(x, a'\right)}\geq b + \norm{J^T_{\mathbb{S}^\mathcal{I}}w}^*\epsilon \implies \; a' \; \text{is adversarially robust recourse.}
\end{equation}

Since $a$ is a recourse action, it must hold that $\inner{w,J_{\mathbb{S}^\mathcal{I}}} > 0$. Consequently, $0 < \beta < \infty$, and thus the additional change $\beta$ to $\mathbf{X}_{\mathcal{I}}$ required by the individual to robustify the recourse action $a$ is finite. 

Under the assumption that the cost function is subadditive, it follows that
$c(x, a')  \leq (1 + \beta) \; c(x, a)$. Rearranging, we obtain the following upper bound on the additional effort required by the individual to robustify the recourse action $a$:

\begin{equation}
    \dfrac{c(x, a') - c(x, a)}{c(x, a)} \leq \beta, \quad \text{for} \quad \beta =
    \dfrac{ \norm{J^T_{\mathbb{S}^\mathcal{I}}w}^*}{\inner{w, J_{\mathbb{S}^{\mathcal{I}}}\theta}}\epsilon
\end{equation}

Under the IMF assumption (i.e., in the non-causal setting), $J_{\mathbb{S}^\mathcal{I}}$ is the identity matrix $I$. Let $m_{\mathcal{A}} \in [0,1]^n$ (resp. $m_{\tilde{\mathcal{A}}}$) be the mask vector for the set of actionable features $\mathcal{A}$ (resp. unactionable features~$\tilde{\mathcal{A}}$). Trivially, $m_{\mathcal{A}} + m_{\tilde{\mathcal{A}}} = \mathbf{1}$. Then
\begin{equation}
    \begin{split}
        \dfrac{c(x, a') - c(x, a)}{c(x, a)} &\leq \dfrac{ \norm{w}^*}{\inner{w, \theta}}\epsilon \\
        &= \dfrac{\norm{(m_{\mathcal{A}} + m_{\tilde{\mathcal{A}}}) \odot w}^*}{\inner{(m_{\mathcal{A}} + m_{\tilde{\mathcal{A}}}) \odot w, \theta}}\epsilon\\
        &\leq \dfrac{\norm{m_{\mathcal{A}}\odot w}^* + \norm{m_{\tilde{\mathcal{A}}} \odot w}^*}{\inner{m_{\mathcal{A}} \odot w, \theta} + \inner{m_{\tilde{\mathcal{A}}} \odot w, \theta}}\epsilon\\
        &= \dfrac{\norm{m_{\mathcal{A}}\odot w}^* + \norm{m_{\tilde{\mathcal{A}}} \odot w}^*}{\inner{m_{\mathcal{A}} \odot w, \theta}}\epsilon\\
    \end{split}
\end{equation}

where the last equality follows from the fact that the prescribed feature change $\theta$ necessarily only affects actionable features, and thus $m_{\tilde{\mathcal{A}}} \odot \theta = \mathbf{0}$. We thus arrive at the upper bound presented in Proposition~\ref{th:cost}. 

\section{Relation between adversarial robustness and robustness of the cost of recourse}\label{sec:robcost}

In this work we focus on generating adversarially robust recourse. That is, we aim to offer a single recourse action that remains valid for plausible individuals similar to the individual $x$ seeking recourse. A related notion of robustness is the robustness of the cost of recourse: the extent to which similar individuals are offered different but similarly costly recourse recommendations~\cite{von2020fairness, slack2021counterfactual, artelt2021evaluating}, that is:
\begin{equation}
    \delta(x) = \sup_{x\in B(x)} \left|\left(\min_{a\in \mathcal{F}(x')} c(x', a) \text{ s.t. } h(\mathbb{CF}(x', a)) = 1\right) - \left(\min_{a\in \mathcal{F}(x)} c(x, a) \text{ s.t. } h(\mathbb{CF}(x, a)) = 1 \right)\right|
\end{equation}

First, we show that the robustness of the cost of recourse is necessary for the existence of adversarially robust recourse

\begin{proposition} Let $x \in \mathcal{X}$ be some individual for which there exists at least one recourse action, and let $B(x)$ be the uncertainty set. If $\delta(x) = \infty$, then there does not exist any adversarially robust recourse action for the individual~$x$.
\end{proposition}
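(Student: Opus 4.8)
The plan is to prove the contrapositive: assume that some adversarially robust recourse action $a^*$ exists for $x$, and deduce that $\delta(x) < \infty$. Write $c^*(x') := \min_{a\in\mathcal{F}(x')} c(x',a)$ subject to $h(\mathbb{CF}(x',a))=1$ for the minimum cost of \emph{standard} recourse at $x'$, with the convention $c^*(x')=+\infty$ when no valid recourse exists at $x'$. The hypothesis that recourse exists for $x$ gives $c^*(x)<\infty$, and the definition of $\delta$ is $\delta(x)=\sup_{x'\in B(x)}|c^*(x')-c^*(x)|$. Since costs are nonnegative, $|c^*(x')-c^*(x)|\le c^*(x')+c^*(x)$, so it suffices to show $\sup_{x'\in B(x)} c^*(x')<\infty$; in particular this will rule out the possibility that $\delta(x)=\infty$ arises because some $x'\in B(x)$ admits no recourse at all.

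First I would invoke the defining property of $a^*$: because it is adversarially robust, $h(\mathbb{CF}(x',a^*))=1$ for every $x'\in B(x)$. Fixing an arbitrary $x'\in B(x)$, I would exhibit a valid recourse action for $x'$ built from $a^*$. Since $a^*=do(\mathbf{X}_{\mathcal{I}}=x_{\mathcal{I}}+\theta^*)$ is a hard intervention to the fixed target value $x_{\mathcal{I}}+\theta^*$, the action $a^*_{x'}:=do(\mathbf{X}_{\mathcal{I}}=x'_{\mathcal{I}}+(x_{\mathcal{I}}+\theta^*-x'_{\mathcal{I}}))$ is of the admissible additive form for $x'$, realises the \emph{identical} intervention, and hence yields the identical counterfactual $\mathbb{CF}(x',a^*_{x'})=\mathbb{CF}(x',a^*)$, which is favourably classified. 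Thus $a^*_{x'}$ is a valid recourse action for $x'$, giving $c^*(x')\le c(x',a^*_{x'})$.

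Next I would bound $c(x',a^*_{x'})$ uniformly over $B(x)$. The prescribed change for $a^*_{x'}$ is $x_{\mathcal{I}}+\theta^*-x'_{\mathcal{I}}$, whose magnitude is controlled by $\norm{x_{\mathcal{I}}-x'_{\mathcal{I}}}$. Because $B(x)$ is the image of the compact ball $\{\Delta:\norm{\Delta}\le\epsilon\}$ under the (continuous) counterfactual mapping, $B(x)$ is bounded, so $\norm{x_{\mathcal{I}}-x'_{\mathcal{I}}}\le M$ for some finite $M$ and all $x'\in B(x)$. For the cost functions considered (norms of the intervention), this gives a uniform bound $c(x',a^*_{x'})\le C<\infty$, whence $\sup_{x'\in B(x)} c^*(x')\le C<\infty$. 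Combining with the reduction above yields $\delta(x)\le C+c^*(x)<\infty$, contradicting $\delta(x)=\infty$; therefore no adversarially robust recourse action can exist.

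The step I expect to be the main obstacle is the feasibility bookkeeping: I must guarantee that the translated action $a^*_{x'}$ lies in $\mathcal{F}(x')$ so that it is an admissible competitor in the minimisation defining $c^*(x')$. This is immediate when actionability is given by a fixed set of mutable features with translation-compatible (e.g.\ unbounded, or suitably nested box) constraints, but in full generality one must argue that feasibility of $a^*$ at $x$ transfers to the nearby individuals $x'\in B(x)$; I would make this explicit under the actionability assumptions used elsewhere in the paper. A secondary, milder point is justifying boundedness of $B(x)$ and of the cost over it, both of which follow from continuity of the SCM mappings together with the structure of the cost function.
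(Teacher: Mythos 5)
Your strategy is sound, and it is a genuinely different route from the paper's. The paper argues qualitatively by contradiction: since recourse exists at $x$, it reads $\delta(x)=\infty$ as implying that some $x'\in B(x)$ admits \emph{no} recourse action at all, and then observes that a robust action $a^*$ would be a recourse action for that $x'$. Your contrapositive argument is quantitative: you exhibit, for every neighbor $x'\in B(x)$, a valid action of uniformly bounded cost, hence $\sup_{x'\in B(x)}c^*(x')<\infty$ and $\delta(x)<\infty$. This buys you something the paper's proof actually misses: $\delta(x)=\infty$ can also arise when every $x'\in B(x)$ has recourse but the minimal costs are finite and unbounded over $B(x)$; the paper's inference from $\delta(x)=\infty$ to the existence of a neighbor with no recourse silently skips this case, while your uniform bound covers it. The price is the transfer assumptions you flag --- but note the paper needs them too: robustness of $a^*$ guarantees only \emph{validity} at $x'$, not $a^*\in\mathcal{F}(x')$, so the paper's claim that $a^*$ ``would be a recourse action for $x'$'' has the same feasibility gap you identify. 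Indeed, the companion proposition proved immediately after this one assumes exactly $\mathcal{F}(x')=\mathcal{F}(x)$ and $c(x',a)=c(x,a)$ for all $x'\in B(x)$; under those assumptions your argument essentially reproduces the paper's proof of that companion bound $\delta(x)\leq c(x,a^*)$, which subsumes the present statement.

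One concrete error in your details, though it is repairable: you read the robustness condition $h\left(\mathbb{CF}(x',a^*)\right)=1$ as applying a hard intervention with the \emph{fixed} target value $x_{\mathcal{I}}+\theta^*$ to $x'$, and then build the translated action $a^*_{x'}$. The paper's convention is the opposite: an action is parametrized by $\theta^*$, and applied to $x'$ it is $do(\mathbf{X}_{\mathcal{I}}=x'_{\mathcal{I}}+\theta^*)$. This is precisely why the paper adopts the additive parametrization --- so that uncertainty in $x$ propagates to the action; under your reading, intervening on all features would make every action trivially robust (your own remark that $a^*_{x'}$ realises the \emph{identical} counterfactual for all $x'$ is the symptom). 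Under the correct semantics, validity of your translated action at $x'$ is not what robustness guarantees, so that step fails as written; but the repair simplifies your proof: the action with the \emph{same} parameter $\theta^*$ is, by definition of robustness, valid at every $x'\in B(x)$, and for the norm-type cost used in the paper its cost is $\norm{\theta^*}$ uniformly over $B(x)$, so your uniform bound holds without even appealing to boundedness of $B(x)$ (modulo the feasibility transfer).
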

\begin{proof} By assumption there exists some recourse action for $x$, and therefore $\delta(x)=\infty$ implies that there exists some individual $x'\in B(x)$ for which there does not exist any recourse action. It follows that there does not exist any robust recourse action $a^*$ for the individual $x$, since $a^*$ necessarily cannot be a recourse action for $x'$. 
\end{proof}

Additionally, under certain conditions, the cost of adversarially robust recourse for an individual $x$ can be used to upper bound the robustness of the cost of recourse $\delta(x)$ around that individual $x$.

\begin{proposition} For some classifier $h$, individual $x\in\mathcal{X}$, and uncertainty set $B(x)=\{\mathbb{CF}(x, \Delta) \; | \; \norm{\Delta} \leq \epsilon\}$, if
\begin{enumerate} [label=(\roman*),topsep=0pt,itemsep=-1ex,partopsep=1ex,parsep=1ex]
\setlength\itemsep{0em}
\item Similar individuals have equal actionability constraints $\mathcal{F}(x) = \mathcal{F}(x') \; \forall x' \in B(x)$.
\item The cost of any given recourse action $a$ is the same for all similar individuals, that is, $c(x, a) = c(x', a) \; \forall x' \in B(x)$.
\end{enumerate}
Then, if there exists some adversarially robust recourse action $a^*$, the robustness of the cost of recourse is upper bounded by the cost of the adversarially robust recourse action $a^*$, that is, $\delta(x) \leq c(x, a^*)$. 
\end{proposition}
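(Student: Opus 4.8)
The plan is to exploit the fact that, under the two assumptions, an adversarially robust recourse action $a^*$ is simultaneously a feasible and valid recourse action for \emph{every} individual in $B(x)$, and therefore furnishes a single number $c(x, a^*)$ that uniformly upper bounds the minimum cost of recourse at each such individual. For brevity write $c^*(y) = \min_{a \in \mathcal{F}(y)} c(y, a)$ subject to $h(\mathbb{CF}(y, a)) = 1$, so that $\delta(x) = \sup_{x' \in B(x)} |c^*(x') - c^*(x)|$.

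First I would note that $x \in B(x)$ (take $\Delta = 0$), so by adversarial robustness of $a^*$ we have $h(\mathbb{CF}(x', a^*)) = 1$ for every $x' \in B(x)$, $x$ included. Assumption (i) gives $\mathcal{F}(x') = \mathcal{F}(x) \ni a^*$, so $a^*$ is feasible at each $x'$; combining, $a^*$ is a feasible, valid recourse action at every $x' \in B(x)$. Hence, by definition of $c^*$, $c^*(x') \leq c(x', a^*)$, and assumption (ii) lets me replace $c(x', a^*)$ by $c(x, a^*)$, yielding $c^*(x') \leq c(x, a^*)$ for all $x' \in B(x)$. Specialising to $x' = x$ also gives $c^*(x) \leq c(x, a^*)$.

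The concluding step turns these two one-sided bounds into a bound on their difference. Because the cost function is nonnegative, both $c^*(x')$ and $c^*(x)$ lie in $[0, c(x, a^*)]$, and any two points of an interval of length $c(x, a^*)$ differ by at most $c(x, a^*)$; thus $|c^*(x') - c^*(x)| \leq c(x, a^*)$ for every $x'$, and taking the supremum over $x' \in B(x)$ delivers $\delta(x) \leq c(x, a^*)$.

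The argument is largely bookkeeping, and the main thing to get right is the role of each hypothesis: assumption (i) is what transports feasibility of $a^*$ from $x$ to its neighbours, assumption (ii) is what makes the bound $c(x, a^*)$ individual-independent, and the (standard) nonnegativity of the cost is what allows the passage from two separate upper bounds to a bound on the absolute difference. I do not anticipate a genuine obstacle beyond stating these dependencies cleanly.
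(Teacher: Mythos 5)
Your proof is correct and follows essentially the same route as the paper's: both arguments observe that, by assumptions (i) and (ii), $a^*$ is a feasible valid action of individual-independent cost $c(x,a^*)$ for every $x'\in B(x)$, so every minimum recourse cost $c^*(x')$ (the paper's $r(x')$) lies in $[0, c(x,a^*)]$, and hence the supremum of $|c^*(x')-c^*(x)|$ is at most $c(x,a^*)$. The paper merely writes this last step as a max over the two one-sided deviations, which is the same nonnegativity-plus-common-upper-bound argument you give.
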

\begin{proof} For notational simplicity, let $r(x) = \min_{a\in \mathcal{F}(x)} c(x, a) \text{ s.t. } h(\mathbb{CF}(x, a)) = 1$. Then
\begin{equation}
    \begin{split}
        \delta (x) &= \max\left(\max_{x'\in B(x)} r(x') - r(x), r(x) - \min_{x'\in B(x)} r(x')\right)\\
        &\leq \max\left(c(x, a^*) - r(x), r(x)\right) \leq \max\left(c(x, a^*), r(x)\right) \leq c(x, a^*)
    \end{split}
\end{equation}
\end{proof}

\section{Implementation details for the experiments}\label{sec:datasets}

\subsection{Datasets}

\subsubsection{Features used}\label{sec:feats}

COMPAS~\cite{larson2016we}: we use the features ``age'', ``race'', ``sex'' and ``priors count'', which are the variables in the causal graph of \citet{nabi2018fair}. We consider ``priors count'' actionable. As actionability constraints, we assume that ``priors count'' is non-negative and can only be decreased.

Adult~\cite{kohavi1996adult}: we use the features ``sex'', ``age'', ``native-country'', ``marital-status'', ``education-num'', ``hours-per-week'', which are the variables in the causal graph of \citet{nabi2018fair}. We consider ``education-num'' and ``hours-per-week'' actionable. We assume that ``education-num'' is bounded within $\left[1, 16\right]$ (i.e., 16 discrete categories, but for simplicity we assume the variable to be real-valued), and that ``hours-per-week'' is bounded within $\left[0, 100\right]$.

South German Credit: we use the version of the dataset introduced by \citet{groemping2019south}, which corrects some coding errors in the original of German Credit dataset from the UCI repository~\cite{murphy1994uci}. We use all 20 features, with target variable ``credit risk''. We assume ``duration'' and ``amount'' as actionable, since these are the only real-valued features (together with ``age'', which we assume unactionable). As actionability constraints, we require ``duration'' and ``amount'' to be positive.

Bail~\cite{schmidt1988predicting}: we use all features except ``file'' (the original test-train split) and ``time'' (which directly gives away the recidivism). We use as target variable ``recid'' (recidivism). For consistency with \citet{ross2021learning}, we consider the features ``school'' (number of years of formal school completed) and ``rule'' (number of prison rule violations reported) as actionable. For the actionability constraints, we assume that ``school'' is bounded within $\left[1, 19\right]$ (we assume the variable to be real-valued) and can only be increased, and that ``rule'' is non-negative and can only be decreased.

Loan: we use the semi-synthetic SCM described in \citet{karimi2020algorithmic} Appendix E.1.2. We sample 1000 individuals. We assume ``education'', ``income'' and ``savings'' to be actionable. For the actionability constraints, we assume that all three actionable variables can only be increased. Additionally, we assume that ``education'' is bounded by above to the maximum education level observed in the training data.

\subsubsection{Standard deviation of the features used}\label{sec:stds}

We standardize real-valued features, and define the uncertainty set $B(x) = \{\mathbb{CF}(x, \Delta) \;|\; \norm{\Delta}_2 \leq \epsilon \}$ with respect to the standardized features. Therefore, any given feature can be perturbed by at least $\pm~\epsilon~\times $ the feature's standard deviation. To provide the reader with a better sense of the magnitude of uncertainty for each $\epsilon$ considered, we list the standard deviation of the real-valued features of each dataset. 

\emph{COMPAS}$\;\;$ ``Age'': 11.7 years, ``Number of convictions'': 4.74 $\quad$\emph{Adult}$\;\;$ ``Age'' 13.6 years, ``Education number'' 2.57, ``Hours per week'' 12.3 $\quad$\emph{German}$\;\;$ ``Age'': 11.3 years, ``Loan duration'': 12.0 months, ``Loan amount'': 2822 Deutsche Mark. $\quad$\emph{Bail}$\;\;$ ``Age'': 9.66 years, ``Number of priors'': 2.91 ``Years of school'': 2.46, ``Rule violations'': 2.41, ``Time served': 22.1 months, ``Follow-up period'': 3.44 months. $\quad$ \emph{Loan}$\;\;$ ``Age'': 11.0 years. The rest of the features lack meaningful units.


For instance, for the COMPAS dataset, robustifying against $\epsilon=0.1$ uncertainty is equivalent to robustifying against perturbations to the feature ``age'' of at least $\pm~0.1 \times 11.7 = 1.17$ years.

\subsection{Training of the classifiers}

We train the decision-making classifiers $h(x)$ using one of the following training objectives:
\begin{itemize}[topsep=0pt]
\setlength\itemsep{0em}
    \item Empirical risk minimization (ERM): $\min_{\psi} \mathbb{E}\left[\ell(h_{\psi}(x), y)\right]$
    \item ERM with actionable features (AF): $\min_{\psi} \mathbb{E}\left[\ell(h_{\psi}(m_{\mathcal{A}}\odot x), y)\right]$
    \item ALLR: $\min_{\psi} \mathbb{E}\left[\ell(h_{\psi}(x), y)+ \mu_2 \norm{m_{\tilde{\mathcal{A}}}\odot\nabla_x h(x)} + \mu_1  \max_{\norm{\delta} \leq \epsilon}|h(x+\delta) - \inner{\delta, \nabla_x  h(x)} -  h(x)|\right]$
    \item \citet{ross2021learning}: $\min_{\psi} \mathbb{E}\left[\ell(h_{\psi}(x), y)+ \mu \min_{\delta}\ell(h(x + m_{\mathcal{A}} \odot d), \mathbf{1})\right]$
\end{itemize}

We use the binary cross-entropy loss as the loss function $\ell$. For each of the above training objectives, we train five different classifiers, each with a different random seed and thus a different 80\%-20\% train-test split. We use Adam~\cite{kingma2014adam} as the optimizer with a learning rate of $10^{-3}$ and a batch size of 100. To determine a suitable number of training epochs for each dataset and training objective, we train for 500 epochs and select the number of training epochs which leads to the best predictive performance in terms of accuracy and Mathews Correlation Coefficient (MCC). The resulting number of training epochs used are presented in Table~\ref{tb:epochs}. 

A tacit assumption in most of the algorithmic recourse literature is that the decision threshold for classifiers of the form $h(x) = \tilde h(x) \geq b$ for $\tilde h: \mathcal{X}\rightarrow\left(0,1\right)$ is set to $b=0.5$. In practical applications, however, the decision threshold $b$ is most often selected based on the nature of the classification problem at hand. For instance, in loan application settings the banking institution has a particular interest in minimizing false positives, since individuals which are mistakenly deemed low risk are very likely to result in monetary losses for the bank. We make the simplifying choice of, after training the classifier $\tilde h (x)$, choosing the decision threshold $b\in(0, 1)$ which maximizes the classifier's MCC. A classifier's MCC is a good metric to describe the classifier's confusion matrix with a single number, where higher MCC are associated with better predictive performance~\cite{powers2020evaluation}.

For the ALLR and \citet{ross2021learning} regularizers, we perform hyperparameter search over the hyperparameter $\mu$. We choose the regularization strength $\mu$ which most facilitate the existence of robust recourse while maintaining sufficient predictive performance. For the regularizer of Ross et al., we search over $\mu\in\{0.01, 0.1, 0.8, 1.5\}$ and find $\mu=0.8$ to work best across all datasets and model types. This aligns with the findings of \citet{ross2021learning}. For ALLR with NN classifiers, we heuristically find that $\mu_1=3.0$ works well across all datasets. We additionally perform hyperparameter search over $\mu_2\in\{0.01, 0.1, 0.5, 3.0\}$.  The best-performing hyperparameters $\mu_2$ are presented in Table~\ref{tb:lambdas}.

\begin{table}[h]
\centering
\caption{Training epochs for each training method, dataset and model type}
\begin{tabular}{c c c c c c c c c c c}
\toprule
\multirow{3}{*}{\makecell{Training \\ method}} & \multicolumn{10}{c}{Dataset \& model type} \\
 & \multicolumn{2}{c}{COMPAS} & \multicolumn{2}{c}{Adult} & \multicolumn{2}{c}{Loan} & \multicolumn{2}{c}{German} & \multicolumn{2}{c}{Bail} \\ 
   & LR & NN & LR & NN & LR & NN & LR & NN & LR & NN \\[1.ex] \hline \\[-1.5ex]
  ERM \& AF & 100 & 10 & 30 & 30 & 20 & 100 & 500 & 20 & 200 & 50 \\
  Ross et al. & 20 & 10 & 20 & 80 & 30 & 20 & 20 & 20 & 40 & 100 \\
  ALLR & 10 & 20 & 20 & 80 & 20 & 30 & 40 & 20 & 20 & 500 \\
  \bottomrule 
\end{tabular}
\label{tb:epochs}
\end{table}

\begin{table}[h]
\centering
\caption{Hyperparameter $\mu_2$ used for ALLR}
\begin{tabular}{c c c c c c}
\toprule
\multirow{2}{*}{Model type} & \multicolumn{5}{c}{Dataset} \\
& COMPAS & Adult & Loan & German & Bail \\[1.ex] \hline \\[-1.5ex]
LR & 0.1 & 0.1 & 0.1 & 0.1 & 0.1 \\
NN & 0.1 & 0.5 & 0.01 & 0.5 & 0.01 \\
 \bottomrule 
\end{tabular}
\label{tb:lambdas}
\end{table}

\subsection{Algorithm~1}
\subsubsection{Projecting to the uncertainty set}
 For $\epsilon$-neighborhoods $B(x) = \{\mathbb{CF}(x, \Delta) \; | \; \norm{\Delta} \leq \epsilon\}$, one need only project to the $\epsilon$-ball of the norm $\norm{\cdot}$, since  in general $\max_{x'\in B(x)} f(x) = \max_{\norm{\Delta} \leq \epsilon } \;  f(\mathbb{CF}(x, \Delta))$. For certain choices of norm $\norm{\cdot}$ (e.g., the $l_2$ norm), projecting to the $\epsilon$-ball can be efficiently done in closed form. 
 
\subsubsection{Hyperparameter tuning}
 We set $N_{\text{max}}=100$ according to our computational budget. To solve the inner maximization problem, we additionally allow a maximum of $50$ gradient steps, for a total of $100\cdot50=5000$ optimization steps. We tune $\lambda$ heuristically. If $\lambda$ is too large, few recourse actions are found (low weight given to crossing the decision boundary), whereas if $\lambda$ is too small, recourse actions tend to be overtly costly (low weight given to finding low-cost recourse). We find $\lambda=1.0$ to work well across all datasets. Additionally, we find $\gamma=0.9$ to work well, as $\lambda$ then decreases relatively slowly (favoring low-cost recourse being found) but $\lambda$ is nonetheless close to 0 after $N_{\text{max}}$ iterations (favoring crossing the decision boundary).

\subsection{Metrics considered}

\begin{itemize}[topsep=0pt]
\setlength\itemsep{0em}
    \item Magnitude of min. perturbation invalidating recourse: for some classifier $h$, individual $x$ and recourse action $a$, we use the C\&W adversarial attack~\cite{carlini2017towards} to search for the minimum additive intervention $\Delta_{\text{adv}}$ which invalidates $a$, that is, $\Delta_{\text{adv}} = \argmin_{\Delta} \norm{\Delta}_2 \text{ s.t. } h(\mathbb{CF}(\mathbb{CF}(x, \Delta), a)) = 0$. We report its magnitude $\norm{\Delta_{\text{adv}}}_2$. While we tested a variety of adversarial attacks, including the fast gradient sign method~\cite{goodfellow2015explaining}, projected gradient descent~\cite{madry2017towards} and DeepFool~\cite{moosavi2016deepfool}, we found C\&W to work best. 
    \item \% recourse found: for some classifier $h$ and uncertainty $\epsilon$, we sample 1000 negatively classified individuals from the test set and use our proposed methods to generate recourse actions which are robust against $\epsilon$ uncertainty. Out of those individuals for which a valid recourse action is found, we then use the C\&W attack described above to generate adversarial perturbations $\Delta_{\text{adv}}$. We report the rate of individuals for which $\norm{\Delta_\text{adv}}_2  \geq \epsilon$ (i.e., for which some recourse action is found and we are not able to invalidate such action) out of the original 1000 individuals.
    \item Relative cost of recourse: we aim to assess whether classifiers trained using the proposed model regularizer offer (robust) recourse at a higher or lower cost compared to classifiers trained using the standard ERM approach. For any given model regularizer, we consider the individuals for which valid (robust) recourse is found for both the ERM classifier and the regularizer classifier. We then report the mean cost of recourse offered to those individuals under the regularized decision-making classifiers. To improve clarity of presentation, for each dataset we normalize the mean cost of recourse found such that the maximum mean cost of recourse reported is 
    1.
\end{itemize}

\section{Ablation study for the ALLR regularizer}

The proposed ALLR regularizer comprises two penalty terms, one which encourages the decision-making classifier to behave locally linearly, and a second one that penalizes locally relying on unactionable features. The two penalty terms are weighted by the hyperparameters $\mu_1$ and $\mu_2$, respectively. To verify that both penalty terms are necessary to facilitate the existence of robust recourse, we use the ALLR hyperparameters presented in Table~\ref{tb:lambdas} to train classifiers with two additional regularization approaches related to ALLR: one for which $\mu_1 = 0$ and a second one for which $\mu_2 = 0$ (i.e., only one of the two penalty terms is considered at a time). We only consider NN classifiers, since for linear classifiers the first penalty term (to behave locally linearly) is trivially always 0. We present results in Figure~\ref{fig:ablation}. We observe that both penalty terms are necessary to facilitate the existence of robust recourse.

 \begin{figure*}[h]
      \begin{center}
    \scalebox{0.85}{\begin{adjustbox}{clip,trim=0.0cm 0.0cm 0.0cm 0.0cm}
            \includegraphics{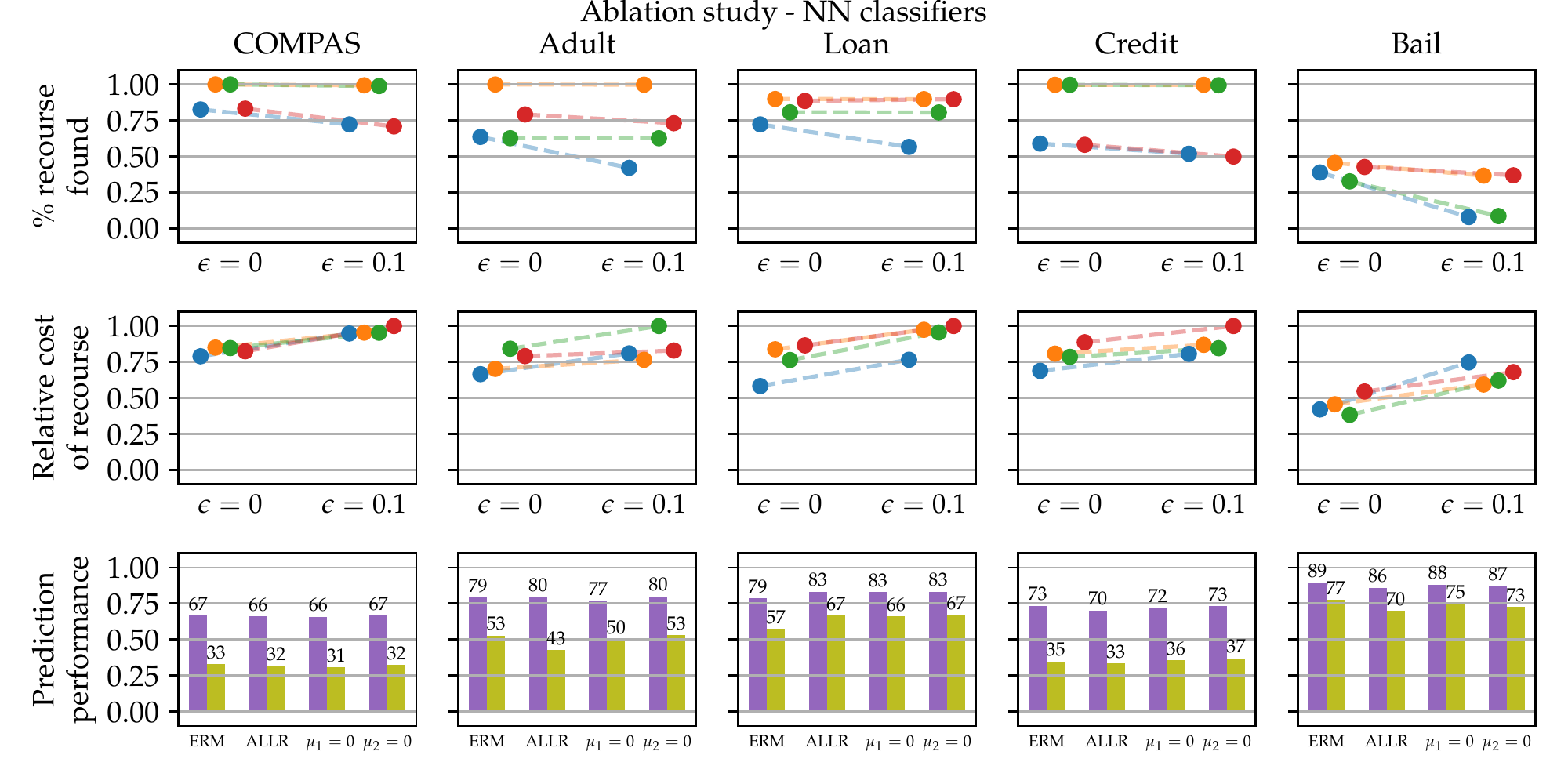}
    \end{adjustbox}}
    \end{center}
    \caption{Results for the ablation study. Both penalties in ALLR are important to facilitate the existence of adversarially robust recourse. Legend: \legendsquare{def1} ERM~~\legendsquare{def2}~ALLR~~\legendsquare{def3}~ALLR $\mu_1=0$~~\legendsquare{def4}~ALLR $\mu_2=0$~~\legendsquare{def5}~Accuracy~~\legendsquare{def6}~MCC score.}
        \label{fig:ablation}
\end{figure*}

\end{document}